\title{Information-Theoretic Generalization Bounds for SGLD via Data-Dependent Estimates}
\newcommand*\samethanks[1][\value{footnote}]{\footnotemark[#1]}
\author{
	Jeffrey Negrea\thanks{Equal contribution authors, order of names was determined randomly.}\\
  University of Toronto,\\
	Vector Institute \\
	\And
	Mahdi Haghifam\samethanks\\
	University of Toronto, \\
	Element AI \\
	\And
	Gintare Karolina Dziugaite \\
	Element AI \\
	\And
	Ashish Khisti \\
	University of Toronto \\
	\And
	Daniel M. Roy \\
	University of Toronto,\\
	Vector Institute
}
\DeclareMathAlphabet\EuRoman{U}{eur}{m}{n}
\SetMathAlphabet\EuRoman{bold}{U}{eur}{b}{n}
\crefname{lemma}{Lemma}{Lemmas}
\crefname{corollary}{Corollary}{Corollaries}
\crefname{theorem}{Theorem}{Theorems}
\let\reftagform@=\tagform@
\def\tagform@#1{\maketag@@@{\ignorespaces\textcolor{gray}{(#1)}\unskip\@@italiccorr}}
\renewcommand{\eqref}[1]{\textup{\reftagform@{\ref{#1}}}}
\declaretheorem[style=plain,numberwithin=section,name=Theorem]{theorem}
\declaretheorem[style=plain,sibling=theorem,name=Lemma]{lemma}
\declaretheorem[style=plain,sibling=theorem,name=Proposition]{proposition}
\declaretheorem[style=definition,sibling=theorem,name=Definition]{definition}
\declaretheorem[style=remark,qed=$\triangleleft$,sibling=theorem,name=Remark]{remark}
\numberwithin{theorem}{section}
\def\[#1\]{\begin{align}#1\end{align}}
\def\*[#1\]{\begin{align*}#1\end{align*}}
\newcommand{\minf}[1]{I(#1)}
\newcommand{\cPr}[2]{\Pr^{#1}[#2]}
\newcommand{\parspace}{\mathcal{W}}
\newcommand{\Dist}{\mathcal D}
\newcommand{\dataspace}{\mathcal Z}
\newcommand\optparen[1]{\ifthenelse{\equal{#1}{}}{}{(#1)}}
\newcommand{\RiskChar}{R}
\newcommand{\Risk}[2]{\RiskChar_{#1}\optparen{#2}}
\newcommand{\EmpRisk}[2]{\hat \RiskChar_{#1}\optparen{#2}}
\newcommand{\SurRisk}[2]{\tilde \RiskChar_{#1}\optparen{#2}}
\newcommand{\SurEmpRisk}[2]{\tilde \RiskChar_{#1}\optparen{#2}}
\newcommand{\set}[2][]{#1\{#2 #1\}}
\newcommand{\brackets}[2][]{#1[#2 #1]}
\newcommand{\given}{\mid}
\newcommand{\Reals}{\mathbb{R}}
\newcommand{\Nats}{\mathbb{N}}
\newcommand{\grad}{\nabla}
\newcommand{\ind}{\mathrel{\perp\mkern-9mu\perp}}
\newcommand{\dee}{\mathrm{d}}
\DeclareMathOperator*{\newlim}{\mathrm{lim}\vphantom{\mathrm{infsup}}}
\DeclareMathOperator*{\newmin}{\mathrm{min}\vphantom{\mathrm{infsup}}}
\DeclareMathOperator*{\newmax}{\mathrm{max}\vphantom{\mathrm{infsup}}}
\DeclareMathOperator*{\newinf}{\mathrm{inf}\vphantom{\mathrm{infsup}}}
\DeclareMathOperator*{\newsup}{\mathrm{sup}\vphantom{\mathrm{infsup}}}
\renewcommand{\lim}{\newlim}
\renewcommand{\min}{\newmin}
\renewcommand{\max}{\newmax}
\renewcommand{\inf}{\newinf}
\renewcommand{\sup}{\newsup}
\renewcommand{\Pr}{\mathbb{P}}
\def\EE{\mathbb{E}}
\newcommand{\defn}[1]{\textbf{#1}}
\newcommand{\cF}{\mathcal F}
\DeclareMathOperator*{\PP}{PP}
\newcommand{\abs}[1]{\lvert #1 \rvert}
\newcommand{\norm}[1]{\lVert #1 \rVert}
\newcommand{\card}[1]{\##1}
\newcommand{\KLname}{\mathrm{KL}}
\newcommand{\KL}[2]{\KLname(#1 \,\|\,#2)}
\newcommand{\Normal}{\mathcal N}
\newcommand{\trace}{\mathrm{tr}}
\newcommand{\EEE}[1]{\underset{#1}{\EE}}
\newcommand{\Var}{\text{Var}}
\newcommand{\Cov}{\text{Cov}}
\newcommand{\loss}{\ell}
\newcommand{\id}[1]{\mathbb{I}_{#1}}
\newcommand{\Alg}{\mathcal{A}}
\newcommand{\unif}[1]{\text{Unif}(#1)}
\newcommand{\bernoulli}{\text{Ber}}
\newcommand{\idx}{\text{idx}}
\newcommand{\lcrx}[4][{-1}]{
	\IfEq{#1}{-1}{\left #2 {{{{#3}}}} \right #4}{
   	\IfEq{#1}{0}{#1 {{{{#2}}}} #3}{
	\IfEq{#1}{1}{\bigl #2 {{{{#3}}}} \bigr #4}{
	\IfEq{#1}{2}{\Bigl #2 {{{{#3}}}} \Bigr #4}{
	\IfEq{#1}{3}{\biggl #2 {{{{#3}}}} \biggr #4}{
	\IfEq{#1}{4}{\Biggl #2 {{{{#3}}}} \Biggr #4}{
    \GenericWarning{"4th argument to lcrx must be -1, 0, 1, 2, 3, or 4"}
    }}}}}}}
\newcommand{\rbra}[2][{-1}]{\lcrx[#1] ( {#2} ) }
\newcommand{\sbra}[2][{-1}]{\lcrx[#1] [ {#2} ] }
\newcommand{\rnderiv}[2]{\frac{\text{d} #1}{\text{d} #2}}
\newcommand{\SA}[1]{\cF_{#1}}
\newcommand{\cEE}[1]{\EE^{#1}}
\newcommand{\ww}{W}
\newcommand{\bgrad}[2]{\nabla \SurEmpRisk{#1}{#2}}
\newcommand{\conditional}[1]{#1_{t|}}
\newcommand{\conditionalback}[1]{#1_{|T}}
\newcommand{\RS}{U}
\newcommand\eqdist{\mathrel{\overset{\smash{\makebox[0pt]{\mbox{\normalfont\tiny d}}}}{=}}}
\newcommand{\indep}{\mathrel{\perp\mkern-9mu\perp}}
\newcommand{\sloss}{\tilde{\ell}}
\newcommand{\dminf}[2]{I^{#1} (#2)}
\newcommand{\cVVar}[2]{\underset{#1}{\text{Var}^{#2}}}
\begin{document}

\maketitle
\begin{abstract}
In this work, we improve upon the stepwise analysis of noisy iterative learning algorithms initiated by \citeauthor{PensiaJogLoh2018} (\citeyear{PensiaJogLoh2018}) and recently extended by \citeauthor{BuZouVeeravalli2019} (\citeyear{BuZouVeeravalli2019}).
Our main contributions are significantly improved mutual information bounds for Stochastic Gradient Langevin Dynamics via data-dependent estimates.
Our approach is based on the variational characterization of mutual information and the use of data-dependent priors that forecast the mini-batch gradient based on a subset of the training samples.
Our approach is broadly applicable within the information-theoretic framework of \citeauthor{RussoZou15} (\citeyear{RussoZou15}) and \citeauthor{XuRaginsky2017} (\citeyear{XuRaginsky2017}).
Our bound can be tied to a measure of flatness of the empirical risk surface.
As compared with other bounds that depend on the squared norms of gradients,
empirical investigations show that the terms in our bounds are orders of magnitude smaller.
\end{abstract}

\section{Introduction}
Stochastic subgradient methods, especially stochastic gradient descent (SGD), are at the core of recent advances in deep-learning practice.
Despite some progress, 
developing a precise understanding of generalization error for that class of algorithms remains wide open.
Concurrently, there has been steady progress for noisy variants of SGD, such as stochastic gradient Langevin dynamics (SGLD) \citep{welling2011bayesian, gelfand1991recursive, RRT17} and its full-batch counterpart, the Langevin algorithm \citep{gelfand1991recursive}.
The introduction of Gaussian noise to the iterates of SGD expands the set of theoretical frameworks that can be brought to bear on the study of generalization. 
In pioneering work, \citet{RRT17} exploit the fact that SGLD approximates Langevin diffusion, a continuous time Markov process, in the small step size limit.
One drawback of this and related analyses involving Markov processes is the reliance on mixing. We hypothesize that SGLD is not mixing in practice, so results based upon mixing may not be representative of empirical performance. 

In recent work, \citet{PensiaJogLoh2018} perform a stepwise analysis of a family of noisy iterative algorithms that includes SGLD and the Langevin algorithm.
At the foundation of this work is the framework of \citet{RussoZou15,XuRaginsky2017}, where mean generalization error is controlled in terms of the mutual information between the dataset and the learned parameters.
(See also the study of on-average KL stability by \citet{wang2016on-average}.)  
However, because the data distribution is unknown, so is any mutual information involving the data. 
This presents a significant barrier to understanding generalization in terms of mutual information.

One of the key contributions of Pensia et al.\ 
is a bound on the mutual information between the 
data and the final weights, which they construct from a bound on the mutual information between the data and the entire trajectory of weights.
By exploiting properties of mutual information, they express the latter as a sum of conditional mutual informations associated with each gradient step.
While these conditional mutual informations are also unknown, %
Pensia et al.\ 
obtain a bound in terms of the Lipschitz constant for the objective function being optimized. %

By passing to the full trajectory and exploiting Lipschitz continuity,
Pensia et al.\ 
circumvent the statistical barrier posed by the unknown mutual information.
Their analysis, however, introduces several sources of looseness. 
In particular, the use of Lipschitz constants, which lead to distribution-\emph{independent} bounds, eradicates any hope 
that these bounds will be non-vacuous for modern models and datasets. 
Indeed, for deep neural networks, the Lipschitz constant for the empirical risk would be prohibitively large, or in some cases infinite, and would immediately render any bound that depends on them vacuous in regimes of interest. 
In order to fully exploit the decomposition proposed by \citep{PensiaJogLoh2018}, one needs distribution-\emph{dependent} bounds on the incremental mutual information at each step.

In fact, by a small change, the bounds established by Pensia et al.\ can be made to depend on expected-squared-gradient-norms, rather than Lipschitz constants, producing distribution-dependent bounds. 
The resulting bound would be similar to a PAC-Bayesian bound due to \citet{pmlr-v75-mou18a}, which we consider to be the SGLD generalization result most similar to the present work. 
Writing $\sum_{t \le T} \eta_t$ for $\sum_{t=1}^{T} \eta_t$,
{
their bound is $O\smash{\rbra[1]{\sqrt{(\beta/n)\sum_{t \le T} \eta_t}}} \vphantom{\sum_t}$ 
and does not place restrictions on the learning rate or Lipschitz continuity of the loss or its gradient.}
{
In other related work, \citet{li2019generalization} derive an $O\smash{\rbra[1]{({1}/{n})\sqrt{\beta\sum_{t\le T} \eta_t}}}\vphantom{\sum_t}$ generalization bound for SGLD that depends on expected-squared-gradient-norms. 
However their result requires the learning rate to scale inversely with the inverse temperature and the Lipschitz constant of the loss, severely limiting the applicability of their result to typical learning problems.}  
Empirically, squared gradient norms are very large during training, which 
suggests that bounds based on these quantities may not explain empirical performance.
As we will show, the dependence on the expected-squared-gradient-norm is spurious.

The key contribution of the present work is the observation that variants of the mutual information between the learned parameters and a subset of the data can be estimated using the rest of the data.  
We refer to such estimates as \emph{data-dependent} due to their intermediate dependence on part of the data. 
The use of data-dependent estimates leads to distribution-dependent bounds that naturally adapt to the model of interest and the data distribution.
In particular, using data-dependent estimates, we arrive at bounds in terms of the \emph{incoherence} of gradients in the dataset.
Roughly speaking, the incoherence measures the amount by which batch gradients computed on subsets of the data disagree, as quantified by squared norm. Crucially, the incoherence is never larger than the squared-gradient-norm on average, and the incoherence is $0$ for most iterations of SGLD with small batches.

We note that the mutual information between learned parameter and a single data point is used to produce generalization bounds in work by \citet{BuZouVeeravalli2019, raginsky2016information, wang2016on-average}. However, in the SGLD analysis of \citep{BuZouVeeravalli2019}, they do not use data-dependent estimates. Instead, they also rely on Lipschitz constants, leading to bounds similar to \citep{PensiaJogLoh2018}. 

In the process of developing tighter distribution-dependent bounds, we also observe that, in some circumstances, 
one may obtain tighter estimates by working with conditional or disintegrated information-theoretic quantities.
In particular, doing so provides more opportunities to exchange expectation and concave functions than are available with previous mutual information bounds. 
Using their own mutual information bound and the chain rule, \citep{BuZouVeeravalli2019} improve on the generalization error bound for SGLD from \citep{PensiaJogLoh2018} by a factor of $\sqrt{\log n}$ where $n$ is the sample size.
The advantage of \citep{BuZouVeeravalli2019} that enables this improvement is that their bound is only penalized once per epoch at a randomly chosen step.
This effectively changes the order of an expectation and square-root, improving the bound.
Building upon \citep{BuZouVeeravalli2019,RussoZou15,XuRaginsky2017}, we develop generalization bounds in terms of disintegrated information-theoretic quantities that extract expectations from concave functions as much as possible.

Finally, much like the stepwise analysis of SGD carried out by \citet{HardtRechtSinger2016},
one could consider an analysis in terms of uniform stability, e.g., in terms of
average leave-one-out KL stability \citep{feldman2018calibrating}.
Under an assumption of uniform stability, 
\citep{pmlr-v75-mou18a} also showed that 
expected generalization error decays rapidly at a $O(1/n)$ rate.
However, uniform stability has poor dependence on the Lipschitz constant, and so, does not even hold in simple settings, 
like univariate logistic regression.
As such, we do not believe this framework is suitable for studying SGLD as applied in modern machine learning.
For other work on information-theoretic analyses generalization error, and on SGLD, see
\citep{JiaoHanWeissman17, LopezJog2018,IbrahimEspositoGastpar19,raginsky2016information, asadi2018chaining, bassily2018learners, thomas2019information}.

\subsection{Contributions}
The present paper makes the following contributions:
\begin{itemize}[leftmargin=0.5cm]
\item We provide novel information-theoretic generalization bounds that relate a learned parameter to a random subset of the training data. These bounds depend on forms of on-average information stability, but are different from those in existing work due to our use of disintegration.
\item We introduce the technique of data-dependent priors for bounding mutual information in data-dependent estimates of expected generalization error. Specifically, we use data-dependent priors to forecast the dynamics of iterative algorithms using a randomly chosen subset of the data. Each possible subset yields a generalization bound for the empirical risk over the complementary subset. Combining this with our information-theoretic generalization bounds, we recover generalization error bounds for the empirical risk on the full dataset.
\item We develop bounds for Langevin dynamics and SGLD that depend on a measure of the \emph{incoherence} of empirical gradients. This quantity is typically orders of magnitude smaller than the squared gradient norms or Lipschitz constants that other bounds depend upon. In our experiments, the difference was a multiplicative factor between $10^2$ and $10^4$.
\item {
Our generalization bound for SGLD is $O\smash{\rbra[1]{\min\lcrx[1]\{{\sqrt{(\beta/bn)\sum_{t\le T} \eta_t},\  ({1}/{n})\sum_{t\le T} \sqrt{\beta\eta_t}}\}}} \vphantom{\sum_t}$ where $\eta_t$ is the learning rate at iteration $t$, $T$ is the number of iterations, $\beta$ is the inverse temperature, and $b$ is the minibatch size. This bound is currently state of the art for bounds without assumptions on the smoothness of the loss or restrictions on the learning rate.}
\end{itemize}

\subsection{Preliminaries} \label{subsec:prelim}

Let $\Dist$ be an unknown distribution on a space $\dataspace$ and 
let $\parspace$ be a space of parameters.
Consider a loss function $\loss: \dataspace \times \parspace \to \Reals$
and the corresponding risk function $\Risk{\Dist}{w} = \EE \loss(Z,w)$. 
Given an i.i.d.\ dataset of size $n$, $S \sim \Dist^n$, we may form the empirical risk function
$\EmpRisk{S}{w} = \frac{1}{n} \sum_{i=1}^{m} \loss(Z_i,w) $, where $S=(Z_1,\dots,Z_n)$.
In the setting of classification and continuous parameter spaces, 
the loss function is discontinuous and the empirical risk function does not convey useful gradient information.
For this reason, it is common to work with a \emph{surrogate} loss, such as cross entropy.
To that end, let $\sloss: \dataspace \times \parspace \to \Reals$ denote a surrogate loss
and let $\SurRisk{\Dist}{w} = \EE \sloss(Z,w)$ and $\SurEmpRisk{S}{w} = \frac{1}{n} \sum_{i=1}^{m} \sloss(Z_i,w) $ be the corresponding surrogate risk and empirical surrogate risk.

Our primary interest is in the generalization performance of learning algorithms.
Abstractly, let $W$ be a random element in $\parspace$ satisfying $W = \Alg(S,V)$,
where $V$ is some auxiliary random element independent from $S$
and $\Alg$ is a measurable function representing a randomized learning algorithm that maps the data  $S$ to a learned parameter $W$.
Our focus will be the \defn{(mean) generalization error} of $W$, i.e., $\EE \sbra{\Risk{\Dist}{W} - \EmpRisk{S}{W}}$.
Note that we have averaged over both the choice of dataset and the source of randomness $V$ available to the learning algorithm $\Alg$.

For random variables $X$ and $Y$, write $\cEE{Y}X = \EE[X | Y]$ 
and $\cPr{Y}{X}$ for the conditional expectation and (regular) conditional distribution, respectively, of $X$ given $Y$.\footnote{We fix arbitrary versions and assume regular versions of conditional distributions exist.}
Besides the usual notions of KL divergence, mutual information, and conditional mutual information (see \cref{apx:definitions} for formal definitions), we rely on the following less common notion:
\begin{definition}%
Let $X$, $Y$, and $Z$ be arbitrary random elements. Let $\otimes$ form product measures.
The \defn{disintegrated mutual information between $X$ and $Y$ given $Z$} is
\*[
 \dminf{Z}{X;Y} = \KL{\cPr{Z}{(X,Y)}}{\cPr{Z}{X} \otimes \cPr{Z}{Y} } .
\]
\end{definition}
It follows immediately from definitions that  $\minf{X,Y\vert Z} = \EE \dminf{Z}{X,Y}$.
Letting $\phi$ satisfy $\phi(Z)= \dminf{Z}{X;Y}$ a.s., define $\minf{X,Y\vert Z=z} = \phi(z)$. This notation is necessarily well defined only up to a null set under the marginal distribution of $Z$.

\section{Methods}
In this section, we establish generalization bounds for learning algorithms in terms of information-theoretic quantities (conditional mutual information, disintegrated mutual information, relative entropy) that depend on the unknown data distribution and the probabilistic properties of the learning algorithm. 
We then describe two complementary strategies that we employ to bound these otherwise intractable quantities. 
In \cref{sec:gen-bounds}, we apply these methods to the study of the Langevin algorithm and SGLD.

We make repeated use of generalized notions of \emph{priors} and \emph{posteriors},
which arise in the PAC-Bayes literature (\citep{Catoni2007, shawe1997pac, McAllester1999}, etc.) and relate to variational bounds on mutual information, which we will now describe:
Consider learned parameters $W$, data $S$, and auxiliary variables $V$, viewed as random elements in $\parspace$, $Z^n$, etc., respectively. 
In PAC-Bayes, a generalized posterior is an arbitrary random measure on $\parspace$. In our setting,
the \defn{posterior, $Q$, (of $W$ given $S$ and $V$)} is the conditional distribution of $W$ given $S$ and $V$.
(Formally, Q is a probability kernel, but one can think informally that $Q=f(S,V)$ for some measurable function taking values in the space of Borel probability measures, and so we will simply say that $Q$ is $\sigma(S,V)$-measurable.)
\begin{definition}[Data-dependent prior]
Let $Q$ be a $\sigma(S,V)$-measurable posterior.
A \defn{(generalized) prior} P is a random measure on $\parspace$, measurable with respect to some sub-$\sigma$-algebra of $\sigma(S,V)$.  A prior $P$ is said to be \defn{data-dependent} if it is not independent of $S$.
\end{definition}

Let $P$ be a $\cF$-measurable data-dependent prior, where $\sigma(V) \subset \cF$. 
Using a variational characterization of mutual information (see \cref{sec:variation-mi-kl}),
we have 
\[  %
\cEE{\cF}[\KL{Q}{P}] \ge \dminf{\cF}{W; S}\ \textrm{a.s.},
\]
with equality for $P= \cPr{\cF}{W}$.
Therefore, if the expected KL divergence is small, $W$ contains little information about $S$ beyond what is already captured by $\cF$. If the special case where the disintegrated mutual information is zero, 
then $W$ is independent of $S$ given $\cF$.
In the context of generalization, this implies that the data $S$ not contained in $\cF$ can be used to form an unbiased estimate of the risk of $W$. The bounds we present below extend this logic to nonzero mutual information.

The utility of using data-dependent priors to control disintegrated mutual information depends on the balance of two effects:
On the one hand, $\minf{W; S } \le \minf{W; S\vert \cF}$, and so conditioning never improves a theoretical bound and may make it looser.
On the other hand, $\minf{W;S}$ depends on the \emph{unknown} data distribution and so distribution-independent bounds will often be very loose. In contrast, the KL divergence based on $P$ can exploit the information in $\cF \subset \sigma(S,V)$ to obtain tighter data-dependent bounds on $\dminf{\cF}{W;S}$. 

In order to construct data-dependent priors, 
we partition the dataset $S$ in two halves, based on a random subset $J\subset\set{1,\dots,n}$ with $\card{J} = m$ nonrandom. 
Let $J=\{j_1,\dots,j_m\}$, 
The first half, $S_J=(Z_{j_1},\dots,Z_{j_m})$, contains $m$ points, which we will use to construct a data-dependent prior $P$.
The second half, $S_J^c$, containing the remaining $n-m$ points, is independent of $P$. 
(Note that $S_J$ and $S_J^c$ are independent of $J$, since $m$ is nonrandom.)

This particular construction of data-dependent priors allow us to leverage a type of \emph{non-uniform KL-stability}:
the prior $P$ may exploit $S_J$ to make a data-dependent forecast of $Q$, yielding a bound, $B$, on the conditional expected generalization error (with respect to the remaining $n-m$ data points in $S_J^c$).
Averaging over $S_J$, we obtain a bound on the (unconditional) expected generalization error.
\begin{definition}%
Let $S_J,S_J^c$ be defined as above. Suppose that $\SA{}$ is a $\sigma$-field with $\sigma(S_J) \subset \SA{} \indep \sigma(S_J^c)$.
An expected generalization error bound based on a \defn{data-dependent estimate} is one of the form
\[
  \EE\sbra{ \Risk{\Dist}{W} - \EmpRisk{S}{W} } \leq \EE[B],
\]
where $B$ is $\SA{}$ measurable, and satisfies
$
  \cEE{\SA{}}\sbra[1]{ \Risk{\Dist}{W} - \EmpRisk{S_J^c}{W} } \leq B.
$
\end{definition}

The idea of using data-dependent priors to obtain tighter bounds is standard in the PAC-Bayes literature  \citep{Amb07,parrado2012pac,DR18,rivasplata2018pac}, but its utility in the present work is brought through by our introduction of data-dependent estimates.
In the following section, we derive information-theoretic bounds on expected generalization error that can exploit data-dependent priors to form data-dependent estimates.
We will then use these tools to study SGLD, without mixing assumptions. 
\subsection{Information-Theoretic Generalization Bounds based on Random Subsets of Data}
\label{sec:bounds}

Existing work by \citet{XuRaginsky2017} bounds the expected generalization error of a learning algorithm in terms of the mutual information between the random parameters and the data.
The following result is a simple extension of \citep[Thm.~1]{XuRaginsky2017} that bounds the expected generalization error in terms of the mutual information between the parameters and a random subset of the data.

\begin{theorem}[Data-Dependent Mutual Information Bound]
\label{thm:dat-dep-mi}
Let $W$ be a random element in $\parspace$, let $S\sim\Dist^n$, and let $J \subseteq [n]$, $|J| = m$, be uniformly distributed and independent from $S$ and $W$.
Suppose that $\loss(Z,w)$ is $\sigma$-subgaussian when $Z \sim \Dist$, for each $w\in\parspace$.
Let $Q = \cPr{S}{W}$, 
and let $P$ be a $\sigma(S_J)$-measurable data-dependent prior on $\parspace$. 
Then
\*[
	\EE\sbra{ \Risk{\Dist}{W} - \EmpRisk{S}{W} }
		& \leq \sqrt{2\frac{\sigma^2}{n-m} I(W;S_J^c)}
    \leq\sqrt{2\frac{\sigma^2}{n-m} \EE [\KL{Q}{P}]} .
\]

\end{theorem}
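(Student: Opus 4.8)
The plan is to prove the two inequalities separately. For the first, I would adapt the Xu--Raginsky decoupling bound, the essential new ingredient being that the full empirical risk can be swapped for the held-out empirical risk $\EmpRisk{S_J^c}{W}$ \emph{in expectation}. Since $J$ is uniform of fixed size $m$ and independent of $(S,W)$, each index lies in $J^c$ with probability $(n-m)/n$, so conditioning on $(S,W)$ gives $\cEE{(S,W)}[\EmpRisk{S_J^c}{W}] = \EmpRisk{S}{W}$, and hence $\EE[\Risk{\Dist}{W}-\EmpRisk{S}{W}] = \EE[\Risk{\Dist}{W}-\EmpRisk{S_J^c}{W}]$. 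This reduces the claim to a mutual-information bound relating $W$ to the size-$(n-m)$ subsample $S_J^c$.

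For the decoupling step I would apply the Donsker--Varadhan variational formula to $g(w,s) = \Risk{\Dist}{w} - \EmpRisk{s}{w}$ with $(X,Y) = (W, S_J^c)$. Under the product of marginals $P_W \otimes P_{S_J^c}$ the function $g$ has mean zero, because $S_J^c \sim \Dist^{n-m}$ is then independent of $W$ and $\EE_{S_J^c}[\EmpRisk{S_J^c}{w}] = \Risk{\Dist}{w}$; under the true joint its mean is exactly the reduced generalization error. The point to get right is subgaussianity: for fixed $w$, $g(w, S_J^c)$ is an average of $n-m$ i.i.d.\ $\sigma$-subgaussian terms, hence $\sigma/\sqrt{n-m}$-subgaussian, and this moment-generating-function bound is \emph{uniform in $w$}, so it survives integration against $P_W$. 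Donsker--Varadhan then yields $\lambda\,\EE[g(W,S_J^c)] \le \minf{W;S_J^c} + \lambda^2\sigma^2/(2(n-m))$ for every $\lambda$; optimizing over $\lambda$ gives the first inequality, with constant $\sqrt{2\sigma^2 \minf{W;S_J^c}/(n-m)}$.

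For the second inequality I would pass through the conditional mutual information $\minf{W;S_J^c \vert S_J}$. The variational characterization already recorded in the text gives, for the $\sigma(S_J)$-measurable prior $P$, the disintegrated golden formula $\cEE{S_J}[\KL{Q}{P}] \ge \dminf{S_J}{W;S}$ a.s., with equality at $P = \cPr{S_J}{W}$; averaging over $S_J$ yields $\minf{W;S_J^c \vert S_J} = \minf{W;S \vert S_J} \le \EE[\KL{Q}{P}]$, where the first equality holds because, given $S_J$, the pairs $(W,S)$ and $(W,S_J^c)$ carry the same information. It then remains to remove the conditioning, i.e.\ to show $\minf{W;S_J^c} \le \minf{W;S_J^c \vert S_J}$. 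This is the step I expect to be the main obstacle, since conditioning can move mutual information in either direction in general. Here it goes the right way precisely because the two halves are independent: as $m$ is nonrandom, $(S_J,S_J^c) \sim \Dist^m \otimes \Dist^{n-m}$, so $\minf{S_J;S_J^c}=0$, and the symmetry of the interaction information gives $\minf{W;S_J^c \vert S_J} - \minf{W;S_J^c} = \minf{S_J;S_J^c \vert W} \ge 0$. Chaining the three bounds then delivers the stated result.
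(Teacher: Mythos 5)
Your proof of the first inequality is essentially the paper's own argument: the reduction $\EE[\EmpRisk{S}{W}] = \EE[\EmpRisk{S_J^c}{W}]$ from $J \indep (S,W)$, Donsker--Varadhan applied to $f(w,s) = \Risk{\Dist}{w} - \EmpRisk{s}{w}$ against the product of marginals, and the observation that the subgaussian MGF bound is uniform in $w$ and therefore survives integration against the decoupled copy of $W$, giving $\sigma/\sqrt{n-m}$-subgaussianity of $f(\tilde W, S_J^c)$. That last point is precisely the subtlety the paper goes out of its way to flag in its discussion of \cref{thm:XR-mi-subg}, and you handle it correctly; the optimization over $\lambda$ then matches the paper's proof step for step.

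For the second inequality, your overall route (condition on $S_J$, apply the disintegrated variational characterization, then remove the conditioning using $\minf{S_J;S_J^c} = 0$ and interaction-information symmetry) is in the same spirit as the paper, which delegates this step to \cref{prop:variation-mi-kl} together with $\minf{W;S_J^c} \le \minf{W;S_J^c \vert S_J}$ -- indeed you give more detail than the paper's appendix does. However, your intermediate claim $\minf{W;S \vert S_J} = \minf{W;S_J^c \vert S_J}$, justified by ``given $S_J$, the pairs $(W,S)$ and $(W,S_J^c)$ carry the same information,'' is false in general: from $(S_J, S_J^c)$ one cannot reconstruct $S$, because the interleaving requires knowledge of $J$, and $W$ may depend on that interleaving. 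Concretely, take $n=2$, $m=1$, $Z_1, Z_2$ i.i.d.\ fair bits, and $W = Z_1$: then $\minf{W;S\vert S_J} = H(W \vert S_J) \approx 0.81$ bits, while a direct computation gives $\minf{W;S_J^c \vert S_J} \approx 0.31$ bits. What is true -- and all your chain actually needs -- is the one-sided bound $\minf{W;S_J^c \vert S_J} \le \minf{W;S \vert S_J}$: since $S_J^c$ is a function of $(S,J)$, we have $\minf{W;S_J^c\vert S_J} \le \minf{W;(S,J)\vert S_J} = \minf{W;S\vert S_J} + \minf{W;J \vert S, S_J}$, and the last term vanishes because $J \indep (S,W)$. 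With the asserted equality weakened to this inequality, your argument closes correctly and delivers the stated bound.
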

The proof of this result can be found in \cref{apx:proofs-1}.
When $m=0$, this recovers \citep[Thm.~1]{XuRaginsky2017}.

When the size of the subset is $m=n-1$, this bound is weaker than \citep[Prop.~1]{BuZouVeeravalli2019}, due to the order of the concave square-root function and the expectation over the choice datapoint to be left out. This difference is addressed by our next result.

Randomization is one way that learning algorithms can control the mutual information between (a random subsets of) the data and the learned parameter.
Let $\RS$ be a random element independent from $S$ and $J$, representing some aspect of the source of randomness used by the learning algorithm.
Because $S \indep \{{J,\RS}\}$ and $S\sim \Dist^n$, we have $(S_J,\RS)\indep S_J^c$ and thus \*[
\minf{W;S_J^c} \le \minf{W;S_J^c \vert S_J, \RS} = \EE \dminf{S_J, \RS}{W;S_J^c},
\]
where the last equality follows from the definition of conditional mutual information.
The next result shows that we can pull the expectation over both $S_J$ and $U$ outside the concave square-root function. In the case of SGLD, $\RS$ will be the sequence of minibatch index sets.

\begin{theorem}[Data-Dependent Disintegrated Mutual Information Bound]\label{thm:dat-dep-dmi}
Let $W$, $S$, and $J$ be as in \cref{thm:dat-dep-mi},
and let $\RS$ be independent from $S$ and $J$.
Suppose that $\loss(Z,w)$ is $\sigma$-subgaussian when $Z \sim \Dist$, for each $w\in\parspace$.
Let $Q = \cPr{S,\RS}{W}$
and let $P$ be a $\sigma(S_J,\RS)$-measurable data-dependent prior on $\parspace$. 
Then
\*[
	\EE\sbra{ \Risk{\Dist}{W} - \EmpRisk{S}{W} }
		\leq \EE \sqrt{2\frac{\sigma^2}{n-m}\dminf{S_J,\RS}{W;S_J^c}}
    \leq \EE \sqrt{2\frac{\sigma^2}{n-m}\cEE{S_J,\RS}\KL{Q}{P}}
\]
\end{theorem}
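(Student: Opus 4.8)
The plan is to prove this as the disintegrated counterpart of \cref{thm:dat-dep-mi}: I would run the subgaussian concentration argument conditionally on $\cF \defas \sigma(S_J,\RS)$ and integrate over $\cF$ only at the end, so the expectation lands outside the concave square root. Two preliminary reductions set this up. First, because $J \indep (S,W)$ and $J$ is uniform over size-$m$ subsets, each index lies in $J^c$ with probability $(n-m)/n$ independently of $(S,W)$; conditioning on $(S,W)$ and averaging over $J$ then gives $\EE[\EmpRisk{S_J^c}{W}] = \EE[\EmpRisk{S}{W}]$, so it suffices to control $\EE\sbra{\Risk{\Dist}{W} - \EmpRisk{S_J^c}{W}}$. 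Second, since $\RS \indep (S,J)$ and $S\sim\Dist^n$, the complementary sample satisfies $S_J^c \indep \cF$ with $S_J^c \sim \Dist^{n-m}$, which is what makes the conditional problem behave like a genuine i.i.d.\ learning problem.

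The core step is a conditional Donsker--Varadhan bound. Working under the regular conditional law given $\cF$, set $f(W,S_J^c) = \Risk{\Dist}{W} - \EmpRisk{S_J^c}{W}$. Under the conditional product measure $\cPr{\cF}{W}\otimes\cPr{\cF}{S_J^c}$, the parameter $W$ is independent of $S_J^c$, and since $\cPr{\cF}{S_J^c} = \Dist^{n-m}$ a.s.\ (by $S_J^c\indep\cF$) the conditional mean of $f$ is zero; moreover, for each fixed $W$, $f$ is an average of $n-m$ independent centered $\sigma$-subgaussian terms, hence $\sqrt{\sigma^2/(n-m)}$-subgaussian. The variational (Donsker--Varadhan) inequality applied conditionally then yields, $\cF$-a.s.,
\*[
\cEE{\cF}\sbra{\Risk{\Dist}{W} - \EmpRisk{S_J^c}{W}} \le \sqrt{2\frac{\sigma^2}{n-m}\,\dminf{\cF}{W;S_J^c}}.
\]
Taking expectation over $\cF$ and using the first reduction gives the first inequality, with $\EE$ now outside the square root --- exactly the gain over \cref{thm:dat-dep-mi}, where concavity would otherwise pull it inside.

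For the second inequality I would invoke the variational characterization recorded earlier: for any $\sigma(S_J,\RS)$-measurable prior $P$, $\dminf{\cF}{W;S} \le \cEE{\cF}\KL{Q}{P}$ a.s. Since $S_J$ is $\cF$-measurable, conditioning on $\cF$ freezes $S_J$ and leaves only the randomness of $S_J^c$ in $S$; the KL divergences defining the two disintegrated informations are then related by an $\cF$-measurable bijection, giving $\dminf{\cF}{W;S} = \dminf{\cF}{W;S_J^c}$ a.s. Monotonicity of the square root and a final expectation complete the claim.

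The main obstacle is the rigorous transfer of the subgaussian/Donsker--Varadhan machinery to regular conditional distributions: one must check measurability of the conditional quantities and that the subgaussian parameter is preserved under conditioning, which is where the independence $S_J^c \indep \cF$ does the essential work. Once conditional subgaussianity of $f$ given $\cF$ is in hand, the remainder is a routine disintegration of the unconditional Xu--Raginsky argument used for \cref{thm:dat-dep-mi}.
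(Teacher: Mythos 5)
Your proposal follows essentially the same route as the paper's proof: the paper likewise applies the Donsker--Varadhan formula conditionally on $\sigma(S_J,\RS)$ (phrased there via an auxiliary copy $\tilde W$ that is conditionally independent of $S_J^c$ given $(S_J,\RS)$, which is exactly your conditional product measure $\cPr{\cF}{W}\otimes\cPr{\cF}{S_J^c}$), exploits that $f(w,\cdot)$ is $\sigma/\sqrt{n-m}$-subgaussian under $\cPr{\cF}{S_J^c}=\Dist^{n-m}$, optimizes the resulting cumulant-generating-function bound in $\lambda$, and only then averages over $(S_J,\RS)$ so the expectation lands outside the square root, finishing with $\EE\,\EmpRisk{S_J^c}{W}=\EE\,\EmpRisk{S}{W}$ from $J\indep(S,W)$. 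The second inequality is obtained in the paper the same way you do it, from the disintegrated variational characterization of mutual information (\cref{prop:variation-mi-kl} combined with the disintegration theorem), so your argument matches the paper's in both structure and substance.
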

The proof of this result can be found in  \cref{apx:proofs-1}.
Since $\dminf{S_J,\RS}{W;S_J^c}$ is $(S_J,\RS)$-measurable, we may use $S_J$ and $\RS$ to obtain a data-dependent bound.
In the case that $m=n-1$, our bound is similar to, but not strictly comparable to, \citep[Prop.~1]{BuZouVeeravalli2019}.
Our bound is incomparable due to our use of disintegrated mutual information, $\dminf{S_J}{W;S_J^c}$ and the fact that 
we take the expectations over the dataset outside of the convex square-root function.
The disintegrated mutual information cannot be upper bounded by the full mutual information, $I(W,S_J^c)$, which appears in \citep{BuZouVeeravalli2019} (even by taking expectations under the square root using Jensen's inequality). However, \cref{thm:dat-dep-dmi} is essentially a disintegrated version of \citep[Prop.~1]{BuZouVeeravalli2019}. In their actual SGLD expected generalization error bound, \citep{BuZouVeeravalli2019} controls the unconditional mutual information using the Lipschitz constant of the surrogate loss. Hence, one could easily recover the same bound using our result. The conditioning we have done, however, allows us to control the mutual information more carefully in order to achieve a tighter bound for SGLD than is provided by \citep{BuZouVeeravalli2019}. 

These bounds allow for a tradeoff:
for large $m$,
the mutual information is measured between the parameter and a small random subset of the data, and so we expect the mutual information to be small.
(Indeed, this term will decrease monotonically in $m$.)
At the same time, the $\frac{1}{n-m}$ term is larger, reflecting the reduced effect of averaging over only $n-m$ data to form our estimate of the empirical risk.
It is unclear without further context whether this bound is tighter in the regime of small, intermediate, and large $m$.
In fact, we find that, for the bounds we derive in our applications, $m=n-1$ is optimal.
This difference materially affects the quality and tightness of the bounds, as is discussed in \cref{rem:order-m-n}.
However, for $m = n-1$ and bounded loss, the following bound is tighter, while it is incomparable for other values of $m$.

\begin{theorem}[Data-Dependent KL Bound] \label{thm:dat-dep-pacb}
Let $W$, $S$, $J$, and $U$ be as in \cref{thm:dat-dep-dmi}.
Let $Q = \cPr{S,\RS}{W}$
and let $P$ be a $\sigma(S_J,\RS)$-measurable data-dependent prior on $\parspace$.
Suppose that $\loss(Z,w)$ is $[a_1,a_2]$-bounded a.s.\ when $Z\sim\Dist$, for each $w\in\parspace$.
\*[
	\EE \sbra{ \Risk\Dist {W} - \EmpRisk{S} {W}}
		& \leq \EE\sqrt{\frac{(a_2-a_1)^2}{2} \ \KL{Q}{P} }.
\]
\end{theorem}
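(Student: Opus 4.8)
The plan is to run the conditional change-of-measure (Gibbs variational) argument that underlies \cref{thm:dat-dep-dmi}, but to choose the tilt parameter in a \emph{data-dependent} way so that the outer expectation is pulled entirely outside the square root, leaving the pointwise divergence $\KL{Q}{P}$ in place of its conditional expectation $\cEE{S_J,\RS}\KL{Q}{P}$. Phrased in the language of the preceding definition, I would exhibit the $\SA{}$-measurable quantity $B=\cEE{\SA{}}\sbra[1]{\sqrt{\tfrac{(a_2-a_1)^2}{2}\KL{Q}{P}}}$, verify $\cEE{\SA{}}\sbra{\Risk\Dist W-\EmpRisk{S_J^c}{W}}\le B$, and note that $\EE B$ is exactly the claimed right-hand side.

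First I would reduce the empirical risk on all of $S$ to the empirical risk on the held-out block $S_J^c$. Since $J$ is uniform with $\card{J}=m$ nonrandom and $J\indep(S,W)$, every index lies in $J^c$ with probability $(n-m)/n$, so $\EE_J\EmpRisk{S_J^c}{w}=\EmpRisk{S}{w}$ for each fixed $w$; taking expectations then gives $\EE\sbra{\Risk\Dist W-\EmpRisk S W}=\EE\sbra{\Risk\Dist W-\EmpRisk{S_J^c}{W}}$. This is the same starting point as \cref{thm:dat-dep-dmi}.

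Next I would condition on $\SA{}=\sigma(S_J,\RS)$. Because $S\indep\set{J,\RS}$ and $S\sim\Dist^n$, the block $S_J^c$ is distributed as $\Dist^{n-m}$ and is independent of $\SA{}$, while the prior $P$ is $\SA{}$-measurable. Working conditionally on $\SA{}$, I would apply the Donsker--Varadhan change of measure from $Q$ to $P$ in the parameter variable, introducing a tilt $\lambda$ that I allow to depend on the realization of $S_J^c$. Keeping $\lambda$ data-dependent is exactly what separates this from \cref{thm:dat-dep-dmi}: there a single $\lambda$ is shared across realizations of $S_J^c$, which forces Jensen to act on $\KL{Q}{P}$ and leaves $\cEE{S_J,\RS}\KL{Q}{P}$ inside the root, whereas a per-realization $\lambda$ can be optimized pointwise, so after optimization the root contains $\KL{Q}{P}$ itself. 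Boundedness enters through Hoeffding's lemma, which controls the log-moment-generating function of the held-out loss by a quadratic in $\lambda$ with variance proxy $\sigma^2=(a_2-a_1)^2/4$; and because $P\indep S_J^c$ and $\EE_{Z\sim\Dist}\loss(Z,w)=\Risk\Dist w$, the first-order contribution of the held-out sample has zero mean once integrated over $S_J^c$. This cancellation is what legitimizes a $\lambda$ that varies with $S_J^c$ without producing an uncontrolled cross term. Optimizing $\lambda$ pointwise then yields $\cEE{\SA{}}\sbra{\Risk\Dist W-\EmpRisk{S_J^c}{W}}\le\cEE{\SA{}}\sbra[1]{\sqrt{\tfrac{(a_2-a_1)^2}{2}\KL{Q}{P}}}$, and taking the expectation over $\SA{}$ (hence over $S_J$, $\RS$, and $J$) gives the claim.

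I expect the main obstacle to be this disintegration step: arranging matters so that the expectation over $S_J^c$ sits \emph{outside} the concave square root (so that the pointwise $\KL{Q}{P}$ survives) while retaining the sharp single-sample constant $2\sigma^2=\tfrac{(a_2-a_1)^2}{2}$. The delicate point is where to center the change of measure --- on the held-out loss itself versus on the population--empirical difference --- since a careless choice inflates the effective range of the exponent and hence the constant; it is the vanishing of the $S_J^c$-mean, together with the measurability structure $\sigma(S_J)\subset\SA{}\indep\sigma(S_J^c)$, that reconciles the sharp constant with the pointwise divergence. Finally, a conditional-Jensen comparison with \cref{thm:dat-dep-dmi} confirms that for $m=n-1$ this bound is never larger.
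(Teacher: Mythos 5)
Your route is structurally the same as the paper's own proof: reduce $\EmpRisk{S}{W}$ to $\EmpRisk{S_J^c}{W}$ via the uniformity and independence of $J$; apply the Donsker--Varadhan formula conditionally on $\sigma(S_J,\RS)$ between $Q$ and $P$; use $P\indep S_J^c$ so that the prior's gap term vanishes after averaging over $S_J^c$; and optimize the tilt $\lambda$ pointwise in $S_J^c$ (i.e.\ inside the conditional expectation), which is precisely what leaves $\KL{Q}{P}$ rather than $\cEE{S_J,\RS}\KL{Q}{P}$ under the square root, with the full expectation outside. Your reduction step, your use of the measurability and independence structure, and your closing Jensen comparison with \cref{thm:dat-dep-dmi} all coincide with the paper.

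The genuine gap is your mechanism for the constant. Using the paper's shorthand $\Risk{\Dist}{P}, \EmpRisk{S_J^c}{P}, \Risk{\Dist}{Q}, \EmpRisk{S_J^c}{Q}$ for prior/posterior averages of the risks: if you center the exponent on the held-out loss alone, i.e.\ take $g(w)=-\lambda\,\EmpRisk{S_J^c}{w}$ so that Hoeffding sees a range of width $a_2-a_1$, then Donsker--Varadhan controls $\EmpRisk{S_J^c}{P}-\EmpRisk{S_J^c}{Q}$, and averaging over $S_J^c$ turns $\EmpRisk{S_J^c}{P}$ into the \emph{prior's} population risk $\Risk{\Dist}{P}$; the posterior's population risk $\Risk{\Dist}{Q}$ --- the quantity the theorem is about --- never appears. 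The zero-mean cancellation you invoke acts only on the prior's linear term; it cannot trade $\Risk{\Dist}{P}$ for $\Risk{\Dist}{Q}$, and doing that transfer with a second change of measure costs another KL-type term, landing you at the same total constant as the cruder choice. So the exponent must contain the full difference $\Risk{\Dist}{w}-\EmpRisk{S_J^c}{w}$, exactly as in the paper's proof; but pointwise in $S_J^c$ this function of $V\sim P$ ranges over an interval of width $2(a_2-a_1)$ (its extremes need not occur at the same $w$, e.g.\ a two-point prior with gaps $\pm(a_2-a_1)$), so Hoeffding's lemma gives subgaussian parameter $a_2-a_1$, hence the bound $\EE\sqrt{2(a_2-a_1)^2\,\KL{Q}{P}}$ --- a factor of $2$ larger than the statement. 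You correctly identified this as the delicate point: the paper closes it by directly asserting that $\Risk{\Dist}{V}-\EmpRisk{S_J^c}{V}$ is $\tfrac{a_2-a_1}{2}$-subgaussian under $V\sim P(S_J,\RS)$, which is exactly the claim your centering device was meant to justify; your device, however, does not supply that justification, and no averaging over $S_J^c$ inside the cumulant generating function (the move that rescues the constant in \cref{thm:dat-dep-dmi}) is available here without pushing the expectation back under the square root.
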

The proof of this result can be found in  \cref{apx:proofs-1}.
For an analytic comparison of the three bounds in the case that $m=n-1$, see \cref{apx:bound-compare}.
\cref{rem:bounded-but-why?} explains why this result is only stated for bounded loss functions.

\subsection{Decomposing KL Divergences and Mutual Information for Sequential Algorithms}\label{sec:sequential-kl}

Consider an iterative learning algorithm,
and let $\ww_0,\ww_1,\ww_2,\dots \ww_T \in \parspace$ be the parameters
during the course of $T$ iterations.
In light of the variational bound for mutual information,
we can obtain a generalization bound for $\ww_T$ by
bounding the expected KL divergences between the conditional distribution $\cPr{S_J}{\ww_T}$ and some $S_J$-measurable ``prior'' distribution $P(Z)$.
Unfortunately, the first distribution has no known tractable representation.
\citet{PensiaJogLoh2018} use monotonicity to bound a mutual information involving the terminal parameter with one involving the full trajectory, then use the chain rule to decompose this into a sum of conditional mutual informations.
The same principles allow us to first bound the terminal KL divergence by the KL for the full trajectory, and then decompose the KL divergence for the full trajectory over each individual step.

Setting some notation,
let $T$ be a nonnegative integer,
let $[T]_0 = \set{0,1,2,\dots,T}$,
let $\mu$ be a distribution on $\parspace^{[T]_0}$,
and let $X$ be a random variable with distribution $\mu$.
We are interested in naming certain marginal and conditional distributions (disintegrations) related to $\mu$.
In particular, for $t \in [T]_0$, let
\begin{enumerate}[label=\roman*),leftmargin =*,noitemsep,topsep=0pt]
	    \item $\mu_t  = \Pr[X_t]$, the marginal law of $X_t$;
	    \item $\conditional{\mu} = \cPr{X_{0:(t-1)}}{X_t}$, the conditional law of $X_t$ given $X_{0 :(t-1)}$; and
	    \item  $\mu_{0:t} = \Pr[X_{0:t}]$, the marginal law of $X_{0:t}$.
\end{enumerate}

\begin{proposition}[Decomposition of KL Divergences] \label{prop:kl-decomp}
Let $Q,P$ be probability measures on $\parspace^{[T]_0}$.
Suppose that $Q_0 = P_0$.
Then
\*[\textstyle
	\KL{Q_T}{P_T} \leq \KL{Q}{P} = \sum_{t=1}^{T} \EE_{Q_{0:(t-1)}} \brackets{ \KL{\conditional{Q}}{\conditional{P}}} .
\]
where, as per \cref{subsec:prelim}, $\conditional{Q}$ is the conditional law of $t$-th  iterate given the previous iterates, and so $\KL{\conditional{Q}}{\conditional{P}}$ is a random variable which depends the $(W_0,\dots W_{t-1})\sim Q_{0:t-1}$.
\end{proposition}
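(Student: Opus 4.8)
The plan is to prove the two assertions separately: the equality is the chain rule for relative entropy, and the inequality $\KL{Q_T}{P_T}\le\KL{Q}{P}$ is monotonicity of relative entropy under marginalization (a data-processing inequality). Before either, I would dispatch the degenerate case: if $Q \not\ll P$ then $\KL{Q}{P} = +\infty$ and both claims hold trivially (the chain-rule equality being understood in $[0,+\infty]$), so I may assume $Q \ll P$ throughout. Under the standing assumption that regular conditional distributions exist, this also guarantees $Q_{0:t}\ll P_{0:t}$ for each $t$ and that $\conditional{Q}\ll\conditional{P}$ for $Q_{0:(t-1)}$-almost every history.

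For the equality, the key step is the multiplicative disintegration of the Radon–Nikodym derivative along the coordinate filtration. With $X = X_{0:T}\sim Q$, I would write
\[
\rnderiv{Q}{P}(X_{0:T})
= \rnderiv{Q_0}{P_0}(X_0) \prod_{t=1}^{T} \rnderiv{\conditional{Q}}{\conditional{P}}(X_t \mid X_{0:(t-1)}), \qquad Q\text{-a.s.}
\]
Taking logarithms converts the product into a sum, and taking $\EE_Q$ gives $\KL{Q}{P} = \KL{Q_0}{P_0} + \sum_{t=1}^{T}\EE_Q[\log\rnderiv{\conditional{Q}}{\conditional{P}}]$. For each $t$ I would apply the tower property, conditioning on $X_{0:(t-1)}$: the inner expectation over $X_t\sim\conditional{Q}(\cdot\mid X_{0:(t-1)})$ is by definition $\KL{\conditional{Q}}{\conditional{P}}$, leaving the outer expectation $\EE_{Q_{0:(t-1)}}[\KL{\conditional{Q}}{\conditional{P}}]$. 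Finally, $Q_0 = P_0$ forces $\KL{Q_0}{P_0} = 0$, removing the $t=0$ term and producing the stated sum over $t=1,\dots,T$.

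For the inequality, I would invoke monotonicity of relative entropy under the measurable coordinate projection $\pi:\parspace^{[T]_0}\to\parspace$, $x_{0:T}\mapsto x_T$, whose pushforwards are exactly the terminal marginals $Q_T = \pi_\ast Q$ and $P_T = \pi_\ast P$. Equivalently, and self-containedly, one can rerun the chain-rule computation peeling off the terminal coordinate first, obtaining $\KL{Q}{P} = \KL{Q_T}{P_T} + \EE_{Q_T}[\KL{Q_{0:(T-1)\mid T}}{P_{0:(T-1)\mid T}}]$, and then discarding the second term, which is an expectation of nonnegative relative entropies.

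The main obstacle is purely measure-theoretic: justifying the multiplicative factorization of $\rnderiv{Q}{P}$ in the abstract, non-dominated setting, since $\parspace$ is a general parameter space rather than a finite or Euclidean one. This rests on the assumed existence of regular conditional distributions and on verifying that the product of one-step conditional densities is a genuine version of $\rnderiv{Q}{P}$—most cleanly by induction on $t$, checking the defining integral identity against product-form test sets and extending by a monotone-class argument. Once this factorization is secured, the remaining manipulations (logarithm, linearity of expectation, tower property, and nonnegativity of conditional relative entropy) are routine.
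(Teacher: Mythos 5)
Your proposal is correct and follows essentially the same route as the paper: the equality via the multiplicative factorization of $\rnderiv{Q}{P}$ into one-step conditional densities (with $Q_0=P_0$ killing the initial term) followed by the tower property, and the inequality via the chain rule peeling off the terminal coordinate and discarding the nonnegative conditional term $\EE\KL{\conditionalback{Q}}{\conditionalback{P}}$ --- which is precisely the paper's one-line argument. Your treatment is merely more explicit about the measure-theoretic details (the $Q\not\ll P$ case and justifying the factorization), which the paper leaves implicit.
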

The proof of this result may be found in \cref{apx:proofs-1}.

Considering the KL between full trajectories may yield a loose upper bound on the KL between terminal parameters (in particular, when the trajectory cannot be inferred from the terminus).
We gain, however, analytical tractability, as we will see in the next section when we analyze particular algorithms stepwise.
In fact, many bounds that appear in the literature implicitly require this form of incrementation.
Our approach based on the KL divergence and data-dependent priors gives us much tighter control of the KL divergence contribution of each step.

\section{Generalization Bounds for Specific Algorithms}\label{sec:gen-bounds}
Now that we have all of the theoretical tools required, we may establish bounds on the generalization error of specific noisy iterative learning algorithms by inventing sensible data-dependent priors.
The use of a data-dependent prior which closely forecasts the true algorithm in each step is key in establishing tighter generalization bounds. We first consider the stochastic gradient Langevin dynamics (SGLD) algorithm~\citep{welling2011bayesian}, then handle its full batch counterpart the (unadjusted) Langevin algorithm \citep{ermak1975computer,durmus2017nonasymptotic}, which we will refer to informally as Langevin dynamics (LD).
Note that the loss and risk functions used for training, $(\sloss,\SurRisk{\Dist}{},\SurEmpRisk{S}{})$, need not be the same loss functions used for assessing performance and generalization error, $(\loss,\Risk{\Dist}{},\EmpRisk{S}{})$, as explained in \cref{subsec:prelim}.

\subsection{Stochastic Gradient Langevin Dynamics}\label{sec:sgld}
Let $\eta_t$ to be the learning rate at time $t$; 
$\beta_t$ be the inverse temperature at time $t$; 
and $\epsilon_t$, i.i.d.\ $\Normal(0,\id{d})$.
Let $b_t$ be the minibatch size at time $t$.
We are interested in stochastic gradient Langevin dynamics, whose iterates are given by
\[ \label{sgldup} \textstyle
\ww_{t+1} = \ww_t - \eta_t \bgrad{S_t}{\ww_t} + \sqrt{ {2\eta_t}/{\beta_t}} \,\varepsilon_t.
\]
where $\SurEmpRisk{S_t}{w} = \frac 1 {b_t} \sum_{z\in S_t} \sloss(w,z)$, and $S_t$ is a subset of $S$ of size $b_t$ sampled uniformly at random with a sampling procedure which is independent of $S$, and independent of $\set{\epsilon_t}_{t\geq 0}$.
The $b_t$ data points in $S_t$ are chosen \emph{without replacement}.

\subsubsection{A data-dependent prior for SGLD} \label{subsec:SGLD}
Let ${S_J}$ be a random subset of $S$, of size $m$, chosen independently from $\ww_0,\ww_1,\dots$, and independently of the sequence of minibatches, $\set{S_t}_{t\geq 0}$.
Let the set of indices appearing in the $t$-th minibatch be denoted by $K_t$, so that $S_t = S_{K_t}$ for each $t$.
By assumption, each $K_t$ is a uniformly random subset of $\set{1,\dots, n}$ of size $b_t$.
We set $\RS = (K_1,\dots K_T)$, as to match the notation in the theorems of \cref{sec:bounds}.
Let ${S_J}_t = {S_J}\cap S_t = S_{J\cap K_t}$ and let $b'_t = \card{{S_J}_t}$.
Let $S^c_t = S_t\setminus {S_J} = S_{K_t\setminus J}$ and $b_t^c = b_t - b_t'$.
Define
\[
\label{eq:xi_sgld}
	\xi_t
		&=\frac{b_t^c}{b_t} \rbra{ \bgrad{S^c_t}{\ww_t} - \bgrad{{S_J}}{\ww_t}} .
\]
Let $Q(S,\RS)$ be the joint law of $(W_0,...,W_T)$ given a dataset $S$ and minibatch sequence $\RS$.
Then $Q(S,\RS)$ is a random measure as it depends on the random dataset $S$ and the sequence of indices $\RS$.
It follows from \cref{sgldup} that $\conditional{Q(S,\RS)}$ is multivariate normal with mean $\mu_{Q,t}(S,\RS) = \ww_t - \eta_t\bgrad{S}{\ww_t}$ and covariance $2 \frac{\eta_t}{\beta_t} \id{d}$.
Consider the data-dependent prior defined so that its conditional $\conditional{P}({S_J},\RS)$ is a multivariate normal with covariance $2 \frac{\eta_t}{\beta} \id{d}$, and with mean
\*[
\mu_{P,t} ({S_J},\RS) = \ww_t - \eta_t   \left({\frac{b'_t}{b_t}\bgrad{{S_J}_t}{\ww_t} + \frac{b_t-b'_t}{b_t}\bgrad{{S_J}}{\ww_t}}\right) .
\]
Note that $\mu_{Q,t}(S,\RS) - \mu_{P,t}({S_J},\RS) =  \eta_t \xi_t(S,\idx)$.
Thus the one-step KL divergence satisfies
\*[
    2\KL{Q_{t+1|}(S,\idx)}{P_{t+1|}({S_J,\RS})}
        & = \frac{\beta_t \eta_t}{4} \norm{\xi_t  }_2^2
\]
Applying \cref{prop:kl-decomp}, we have (almost surely over the choice of $(S,J,\RS)$)
\*[
    2\KL{Q_T(S,\RS)}{P_T({S_J},\RS)}
        & \leq \sum_{t=1}^T \cEE{S,J,\RS}\KL{Q_{t|}(S,\RS)}{P_{t|}({S_J},\RS)}
         = \sum_{t=1}^T \cEE{S,J,\RS}\frac{\beta_t \eta_t}{4} \norm{\xi_t  }_2^2 .
\]
Note that $\xi_t$ depends on the exact weight sequence, and hence is $\sigma(S,J,\RS,W_{t-1})$-measurable, but not $\sigma(S,J,\RS)$-measurable.
Hence, $\cEE{S,J,\RS} \frac{\beta_t \eta_t}{8} \norm{\xi_t}_2^2$ is a $\sigma(S,J,\RS)$-measurable for each $t$.

\subsubsection{Expected Generalization Error Bounds for SGLD}\label{sec:sgld-bounds} 
\begin{theorem}[Expected Generalization Error Bounds for SGLD]\label{thm:sgld-bounds}
Let $\set{W_t}_{t\in [T]}$ denote the iterates of SGLD.
Let the batch size be constant, $b_t = b$.
If $\loss(Z,w)$ is $\sigma$-subgaussian for each $w\in\parspace$, then
\small
\[
\EE(\Risk{\Dist}{W_T} - \Risk{S}{W_T}) \label{eq:sgld-bd-1}
    & \leq \EE \sqrt{ \frac{\sigma^2}{n-m} \sum_{t=1}^T  \frac{\beta_t \eta_t}{4} \cEE{S_J, J, \RS}  \norm{\xi_t  }_2^2  } 
    \leq \frac \sigma 2 \sqrt{ \frac{n}{(n-1)^2}  \sum_{t=1}^T \mbox{\footnotesize $\left({\frac{1}{b} + \frac{1}{n} \frac{n-m-1}{m}}\right)$} \beta_t \eta_t \trace(\EE[\hat{\Sigma}_t(S)])  }
\]
\normalsize
and if $\loss(Z,w)$ is $[a_1,a_2]$-bounded, and if $m=n-1$, then
\small
\[
    \EE(\Risk{\Dist}{W_T} - \Risk{S}{W_T})
    & \leq \EE \sqrt{ \frac{(a_2-a_1)^2}{4} \sum_{t=1}^T  \frac{\beta_t \eta_t}{4} \cEE{S, J,\RS}  \norm{\xi_t  }_2^2  } 
    \label{eq:gen_sgld_double_exp} 
   \leq \sbra{ \frac{(a_2-a_1)^2 n}{4(n-1)^2 b}}^{1/2}\
		                    \EE \sqrt{\sum_{t=1}^T \frac{\beta_t \eta_t}{4}  \trace(\cEE{S}[\hat{\Sigma}_t(S)])  }
\]
\normalsize
where $\hat{\Sigma}_t(S) = \cVVar{Z\sim\unif{S}}{\ww_t,S}(\bgrad{Z}{\ww_t})$
is the finite population variance matrix of surrogate gradients.
\end{theorem}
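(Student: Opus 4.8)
The plan is to instantiate the abstract bounds of \cref{sec:bounds} at the SGLD posterior $Q=\cPr{S,\RS}{W_T}$ and the trajectory-forecasting data-dependent prior $P$ constructed in \cref{subsec:SGLD}, and then to reduce the per-step KL contributions to the finite-population gradient variance $\hat{\Sigma}_t$. I treat the two displays separately: \eqref{eq:sgld-bd-1} follows from the disintegrated mutual information bound \cref{thm:dat-dep-dmi} (under $\sigma$-subgaussianity), while \eqref{eq:gen_sgld_double_exp} follows from the KL bound \cref{thm:dat-dep-pacb} (under boundedness, with $m=n-1$). Both proofs share the same two moves: obtain the left-hand inequality by substituting the per-step Gaussian KL, and obtain the right-hand inequality by a finite-population second-moment computation for $\norm{\xi_t}_2^2$.

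For the left-hand inequalities, I take $W=W_T$. Because $S\sim\Dist^n$ with $S\indep(J,\RS)$, the held-out block $S_J^c$ is independent of $(S_J,J,\RS)$, so the argument preceding \cref{thm:dat-dep-dmi} applies with the conditioning enlarged to $\sigma(S_J,J,\RS)$; this is what lets me keep $J$ in the conditioning that appears inside the root. I would then insert the almost-sure trajectory bound $2\KL{Q_T}{P_T}\le\sum_{t=1}^T\cEE{S,J,\RS}\tfrac{\beta_t\eta_t}{4}\norm{\xi_t}_2^2$ derived at the end of \cref{subsec:SGLD} (which is \cref{prop:kl-decomp} combined with the one-step identity $2\KL{\conditional{Q}}{\conditional{P}}=\tfrac{\beta_t\eta_t}{4}\norm{\xi_t}_2^2$ for the two equal-covariance Gaussians), and collapse the nested conditional expectations with the tower property. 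The factor of two from \cref{thm:dat-dep-dmi}, respectively the $\tfrac{(a_2-a_1)^2}{2}$ prefactor of \cref{thm:dat-dep-pacb}, combines with the Gaussian KL to produce the displayed $\beta_t\eta_t/4$ coefficients and the correct conditioning ($\cEE{S_J,J,\RS}$ in \eqref{eq:sgld-bd-1}, $\cEE{S,J,\RS}$ in \eqref{eq:gen_sgld_double_exp}).

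The right-hand inequalities are the substantive step. I would first use concavity of $\sqrt{\cdot}$ to pass the expectation inside the root: fully (over $S$ as well) in the subgaussian case, yielding a $\sqrt{\EE[\,\cdot\,]}$ expression with $\trace(\EE[\hat{\Sigma}_t(S)])$; but only over $(J,\RS)$ in the bounded case, retaining the outer $\EE_S$ and hence the tighter disintegrated $\trace(\cEE{S}[\hat{\Sigma}_t(S)])$. The core computation is then $\EE\norm{\xi_t}_2^2$ (respectively $\cEE{S}\norm{\xi_t}_2^2$). I would condition on $(\ww_t,S)$, under which the per-example surrogate gradients $\bgrad{Z_i}{\ww_t}$ are a fixed population with variance $\trace(\hat{\Sigma}_t(S))=\tfrac1n\sum_{i=1}^n\norm{\bgrad{Z_i}{\ww_t}-\bgrad{S}{\ww_t}}_2^2$, and---crucially---the minibatch index set $K_t$ and the prior subset $J$ remain uniformly random and mutually independent, because $\ww_t$ is determined by $S$, the minibatch indices of the preceding steps, and the injected Gaussian noise, and therefore involves neither the current $K_t$ nor the independent $J$. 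Viewing $\xi_t=(b^c_t/b)(\bgrad{S^c_t}{\ww_t}-\bgrad{S_J}{\ww_t})$ as a contrast between a subsample mean over $K_t\setminus J$ and the mean over $J$, its conditional second moment is a sampling-without-replacement (hypergeometric) calculation that returns a combinatorial multiple of $\trace(\hat{\Sigma}_t(S))$. I expect that multiple to be $(n-m)\tfrac{n}{(n-1)^2}\bigl(\tfrac1b+\tfrac1n\tfrac{n-m-1}{m}\bigr)$, which after cancelling the $1/(n-m)$ prefactor gives the stated coefficients; at $m=n-1$ the second term vanishes (only the single held-out index can enter $K_t$, so $b^c_t\in\{0,1\}$ and $\xi_t$ is supported on the event that the held-out point lies in the minibatch), leaving the clean $n/((n-1)^2 b)$ factor of \eqref{eq:gen_sgld_double_exp}.

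The main obstacle is this finite-population second-moment computation with its exact constants: one must track the random overlap sizes $b'_t$ and $b^c_t$, the negative correlations induced by drawing $K_t$ and $J$ without replacement from the same $n$ points, and the cross term between $\bgrad{S^c_t}{\ww_t}$ and $\bgrad{S_J}{\ww_t}$, all while centering at the full-batch gradient so that only the population variance $\trace(\hat{\Sigma}_t)$ survives. The surrounding measure-theoretic bookkeeping---verifying that $(K_t,J)$ is conditionally independent of $\ww_t$ given $S$, and that each use of the tower property and of Jensen's inequality runs in the correct direction to match the precise conditioning in the two displays---is routine but needs care.
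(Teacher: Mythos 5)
Your proposal is correct and follows essentially the same route as the paper: the left-hand inequalities are obtained exactly as the paper does, by instantiating \cref{thm:dat-dep-dmi} (resp.\ \cref{thm:dat-dep-pacb}) with the trajectory-forecasting prior of \cref{subsec:SGLD}, combining \cref{prop:kl-decomp} with the paper's one-step Gaussian KL identity, while your right-hand inequalities reproduce the paper's \cref{lem:sgld-var}—Jensen applied fully over $(S,J,\RS)$ in the subgaussian case but only conditionally on $S$ in the bounded case, followed by the conditional-variance/hypergeometric computation built on \cref{lem:fp-disjoint-var}—down to the exact coefficient $\frac{n(n-m)}{(n-1)^2}\bigl(\frac{1}{b}+\frac{1}{n}\frac{n-m-1}{m}\bigr)$ and the $m=n-1$ collapse. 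Your key measure-theoretic observation, that $(K_t,J)$ remains uniform and independent of $W_t$ given $S$ so that the first term of the conditional variance formula vanishes, is precisely the step the paper's lemma relies on.
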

\begin{proof}
The results are the direct combinations of \cref{thm:dat-dep-dmi,prop:variation-mi-kl,prop:kl-decomp}; and \cref{thm:dat-dep-pacb,prop:kl-decomp}, respectively, with our data-dependent prior.
Jensen's inequality is used to move expectations under $\sqrt{\cdot}$.
\cref{lem:sgld-var} expresses the results in terms of $\hat\Sigma$.
\end{proof}

\begin{remark}
  Suppose that $\beta_t=\beta$, $b_t=b$, and $m=n-1$. 
  Under uniform moment conditions on $\cEE{S_J, J, \RS}  \norm{\xi_t  }_2^2$, our generalization error bounds in \cref{eq:sgld-bd-1} is clearly {$O\smash{\rbra[1]{\sqrt{(\beta /bn)\sum_{t\le T} \eta_t}}}\vphantom{\sum_t}$}. 
  Since $\xi_t = 0$ whenever $K_t\subset J$, we find that our first bound in \cref{eq:sgld-bd-1} is also 
  $O{\rbra[1]{(1/n)\sum_{t\le T} \sqrt{\beta\eta_t}}} \vphantom{\sum_t}$. 
  To see this, notice that for non-negative random variables $C_t$ and $B_t\sim\bernoulli(p)$, 
  \*[\EE\sqrt{\textstyle{\sum_{t=1}^T} B_t C_t} \leq \EE[\textstyle{\sum_{t=1}^T B_t \sqrt{C_t}}] = p\textstyle{\sum_{t=1}^T \EE[\sqrt{C_t}|B_t=1]}. \]
  When $m=n-1$, taking $B_t = I_{\xi_t \neq 0}$, $p=b/n$, $C_t =\frac{\beta_t \eta_t}{8} \cEE{S_J, J, \RS}  \norm{\xi_t}_2^2$ yields the stated rate.
\end{remark}

\subsection{Langevin Dynamics}\label{subsec:LD}
Under the same notation as above,
the iterates of the Langevin dynamics algorithm are given by
\[ \label{langevinup} \textstyle
\ww_{t+1} = \ww_t - \eta_t  \bgrad{S}{\ww_t} + \sqrt{{2 \eta_t}/{\beta_t}} \,\varepsilon_t .
\]

\subsubsection{Expected Generalization Error Bounds for LD}\label{sec:ld-bounds}
We can recover bounds generalization error bounds for LD as a special case of SGLD when the batch size is the dataset size, $b_t=n$ for all $t$. The data-dependent prior is the same as for SGLD.

\begin{theorem}[Expected Generalization Error Bounds for Langevin Dynamics]
Let $\set{W_t}_{t\in [T]}$ denote the iterates of the Langevin dynamics algorithm.
If $\loss(Z,w)$ is $\sigma$-subgaussian for each $w\in\parspace$, then
\[\label{bounda}
\EE(\Risk{\Dist}{W_T} - \Risk{{S}}{W_T})4
    \leq \sqrt{\frac{\sigma^2}{(n-1) m} \sum_{t=1}^T \frac{\beta_t \eta_t}{4} \EE \trace(\hat{\Sigma}_t(S))},
\]
and if $\loss(Z,w)$ is $[a_1,a_2]$-bounded and $m=n-1$, then
\*[
\EE(\Risk{\Dist}{W_T} - \Risk{{S}}{W_T})
    & \leq \EE \sqrt{\frac{(a_2-a_1)^2}{4} \sum_{t=1}^T \frac{\beta_t \eta_t}{4} \cEE{S_J}\norm{\xi_t  }_2^2  }
		\leq \frac{a_2-a_1}{2(n-1)}\EE \sqrt{\sum_{t=1}^T \frac{\beta_t \eta_t}{4} \cEE{S} \trace(\hat{\Sigma}_t(S))} ,
\]
where $\hat{\Sigma}_t(S) = \cVVar{Z\sim\unif{S}}{\ww_t,S}(\bgrad{Z}{\ww_t})$
is the finite population variance matrix of surrogate gradients.
\end{theorem}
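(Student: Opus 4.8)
The plan is to obtain both displays by recognizing Langevin dynamics as the full-batch specialization of SGLD and reading the bounds off \cref{thm:sgld-bounds} with $b_t = n$. Comparing the LD update \eqref{langevinup} with the SGLD update \eqref{sgldup}, the two coincide exactly when every minibatch is the whole dataset, i.e.\ $S_t = S$ and $b_t = n$. In that case the minibatch index set $K_t$ equals $[n]$ for every $t$, so the auxiliary randomness $\RS = (K_1,\dots,K_T)$ is degenerate and all conditioning on $\RS$ is vacuous. I would therefore reuse the data-dependent prior constructed in \cref{subsec:SGLD} verbatim, so the first task is simply to specialize the incoherence $\xi_t$ of \eqref{eq:xi_sgld} to this setting.

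With $K_t = [n]$ one has ${S_J}_t = {S_J}$ (hence $b'_t = m$) and $S^c_t = S_J^c$ (hence $b^c_t = n-m$), so \eqref{eq:xi_sgld} becomes $\xi_t = \frac{n-m}{n}\rbra{\bgrad{S_J^c}{\ww_t} - \bgrad{{S_J}}{\ww_t}}$. Using the identity $m\,\bgrad{{S_J}}{\ww_t} + (n-m)\,\bgrad{S_J^c}{\ww_t} = n\,\bgrad{S}{\ww_t}$, this telescopes to $\xi_t = \bgrad{S}{\ww_t} - \bgrad{{S_J}}{\ww_t}$, the gap between the full-data gradient and the prior-subset gradient. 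Feeding this into the first inequalities obtained by combining \cref{thm:dat-dep-dmi,thm:dat-dep-pacb} with the one-step Gaussian KL and the decomposition \cref{prop:kl-decomp} already produces the middle expressions of the two LD displays (with $\RS$ dropped). The crucial simplification is then the finite-population variance identity of \cref{lem:sgld-var}: averaging $\norm{\xi_t}_2^2 = \norm{\bgrad{S}{\ww_t} - \bgrad{{S_J}}{\ww_t}}_2^2$ over the uniformly random size-$m$ subset $J$ is precisely the variance of a sample mean under sampling without replacement, and equals $\frac{n-m}{m(n-1)}\trace(\hat{\Sigma}_t(S))$. The elegant point is that the $(n-m)$ in this numerator cancels the $\frac{1}{n-m}$ penalty coming from \cref{thm:dat-dep-dmi}, leaving $\frac{1}{m(n-1)}\trace(\hat{\Sigma}_t(S))$; this is exactly the coefficient appearing in \eqref{bounda}. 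For $m = n-1$ the identity reduces to $\frac{1}{(n-1)^2}\trace(\hat{\Sigma}_t(S))$, which one can also see directly since $\xi_t = \frac{1}{n-1}\rbra{\bgrad{Z_{j^c}}{\ww_t} - \bgrad{S}{\ww_t}}$ in terms of the single held-out point $Z_{j^c}$, so that $\EE_{j^c}\norm{\bgrad{Z_{j^c}}{\ww_t} - \bgrad{S}{\ww_t}}_2^2 = \trace(\hat{\Sigma}_t(S))$.

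To finish I would collect constants. Equivalently, substituting $b = n$ into the finite-population correction factor $\frac1b + \frac1n\frac{n-m-1}{m}$ of \eqref{eq:sgld-bd-1} collapses it to $\frac{n-1}{nm}$, and cancelling against the prefactor $\frac{n}{(n-1)^2}$ yields $\frac{1}{(n-1)m}$, giving \eqref{bounda}; the same substitution in \eqref{eq:gen_sgld_double_exp} turns its prefactor $\frac{(a_2-a_1)^2 n}{4(n-1)^2 b}$ into $\frac{(a_2-a_1)^2}{4(n-1)^2} = \bigl(\frac{a_2-a_1}{2(n-1)}\bigr)^2$, matching the bounded display, where the restriction $m = n-1$ is inherited from \cref{thm:dat-dep-pacb}. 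Throughout, Jensen's inequality is what lets me pull the expectations under the concave $\sqrt{\cdot}$ in the correct order. I expect the only genuine obstacle to be this bookkeeping of conditional expectations around the square root together with the finite-population variance identity: one must verify that $\trace(\hat{\Sigma}_t(S))$ arises from averaging over the held-out subset $J$ (sampling without replacement, hence the $(n-1)$ in the denominator) while the outer expectation over the injected noise and the data is moved inside $\sqrt{\cdot}$ by Jensen; once the finite-population correction constant is pinned down, the remaining simplifications at $b = n$ are routine arithmetic.
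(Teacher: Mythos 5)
Your proposal is correct and matches the paper's own route exactly: the paper gives no separate proof for this theorem, stating only that the LD bounds are recovered by specializing \cref{thm:sgld-bounds} (with the same data-dependent prior) to batch size $b_t = n$, which is precisely what you carry out. Your arithmetic checks out — the factor $\frac1b + \frac1n\frac{n-m-1}{m}$ collapses to $\frac{n-1}{nm}$ at $b=n$, yielding the $\frac{1}{(n-1)m}$ coefficient of \eqref{bounda}, and the prefactor of \eqref{eq:gen_sgld_double_exp} becomes $\frac{a_2-a_1}{2(n-1)}$ — with the finite-population variance identity via \cref{lem:sgld-var} handled correctly.
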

For asymptotic properties of this bound when $\sloss$ is $L$-Lipschitz, as in \citep{PensiaJogLoh2018}, see \cref{apx:asymptotic-results}.
For a simple analytic worked example of mean estimation using Langevin dynamics, refer to \cref{apx:worked-example}.
\begin{remark}[Dependence of our bounds on the subset size, $m$]
\label{rem:order-m-n}
The choice of $m\in\set{1,\dots, n}$ can make a material difference in the quality of the bound and whether it is vacuous or not.
As seen in \cref{bounda}, if $m$ is $\Omega(n)$ then the upper bound on expected generalization error is $O(\beta /n)$. 
If $\beta$ is $\Omega(\sqrt{n})$, as is typical in practice, then overall, the bound is $O(n^{-1/2})$. 
If, on the other hand, $m$ is $o(n)$ then the order of the bound with respect to $n$ would be lower---in particular if $m$ is $O(\sqrt{n})$ then our bound would not be decreasing in $n$ for $\beta$ of order $\Omega(\sqrt{n})$.
\end{remark}

\section{Empirical Results}\label{sec:experiments}
\newcommand{\figwidth}{{0.28\linewidth}}
\begin{figure}[t]
	\centering
	  \captionsetup[subfigure]{aboveskip=0pt,belowskip=-1pt}
		\captionsetup{aboveskip=2pt,belowskip=-5pt}
    \begin{subfigure}[h]{\figwidth}
        \includegraphics[width=\textwidth, trim={2.2em 2em 4em 4em},clip]{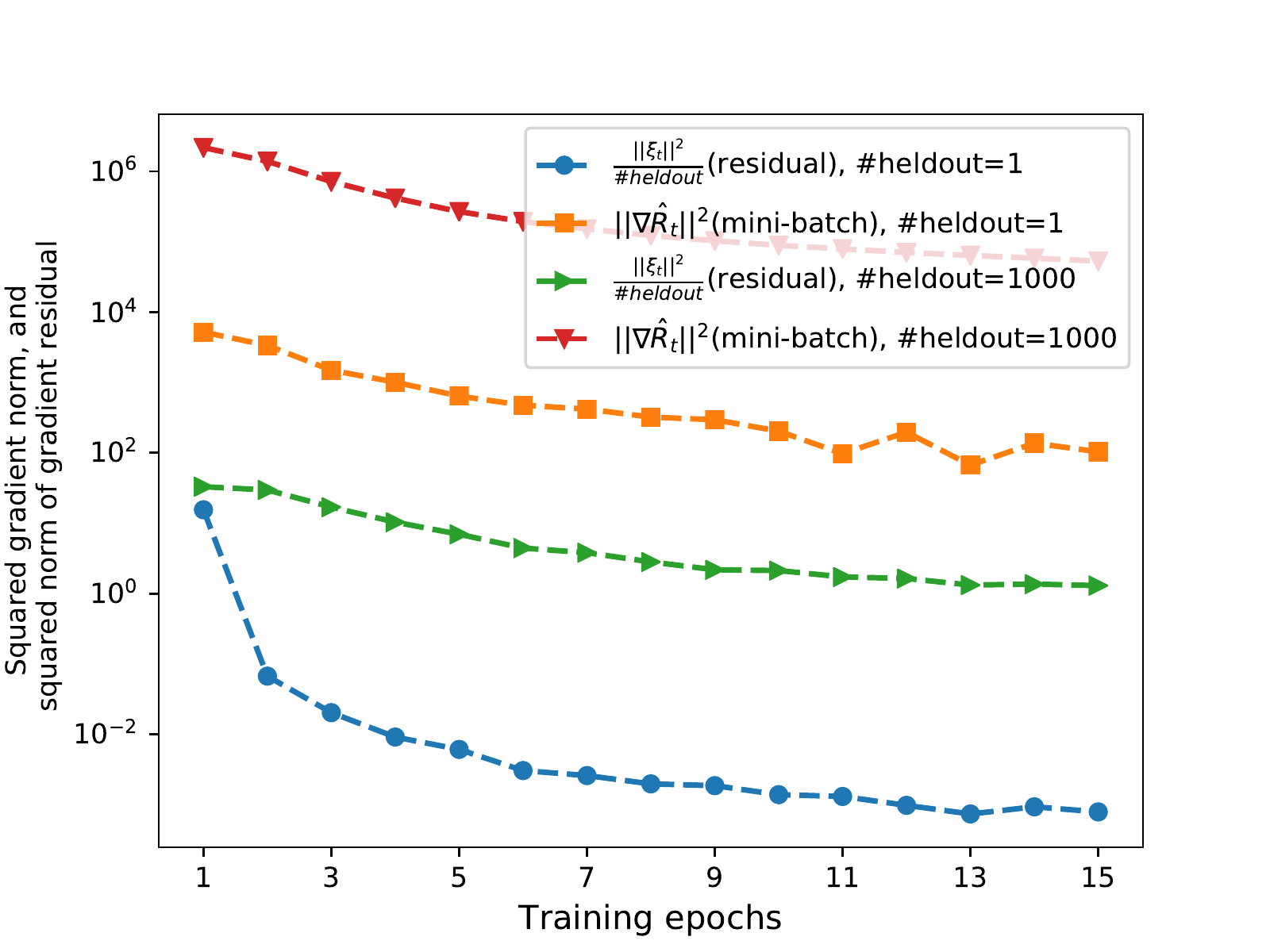}
        \caption{MLP for MNIST.}
        \label{fig:dif_heldout}
    \end{subfigure}%
    \begin{subfigure}[h]{\figwidth}
    \centering
      \includegraphics[width=\textwidth, trim={2.2em 2em 0 0},clip]{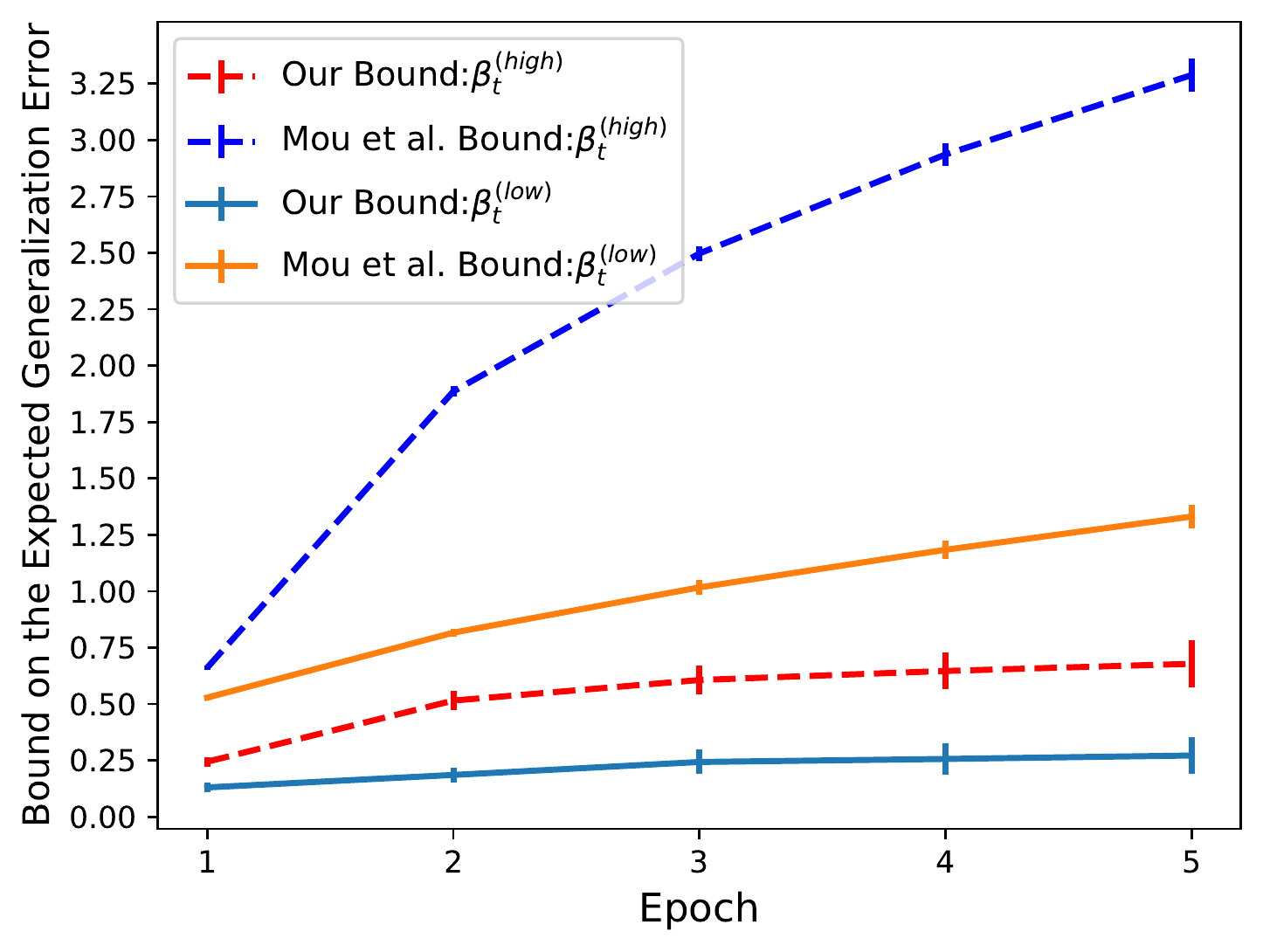}
      \caption{CNN for MNIST.}
      \label{fig:gen_bound_temp}
    \end{subfigure}
    \begin{subfigure}[h]{\figwidth}
      \centering
      \includegraphics[width=\textwidth, trim={2.2em 2em 0 0},clip]{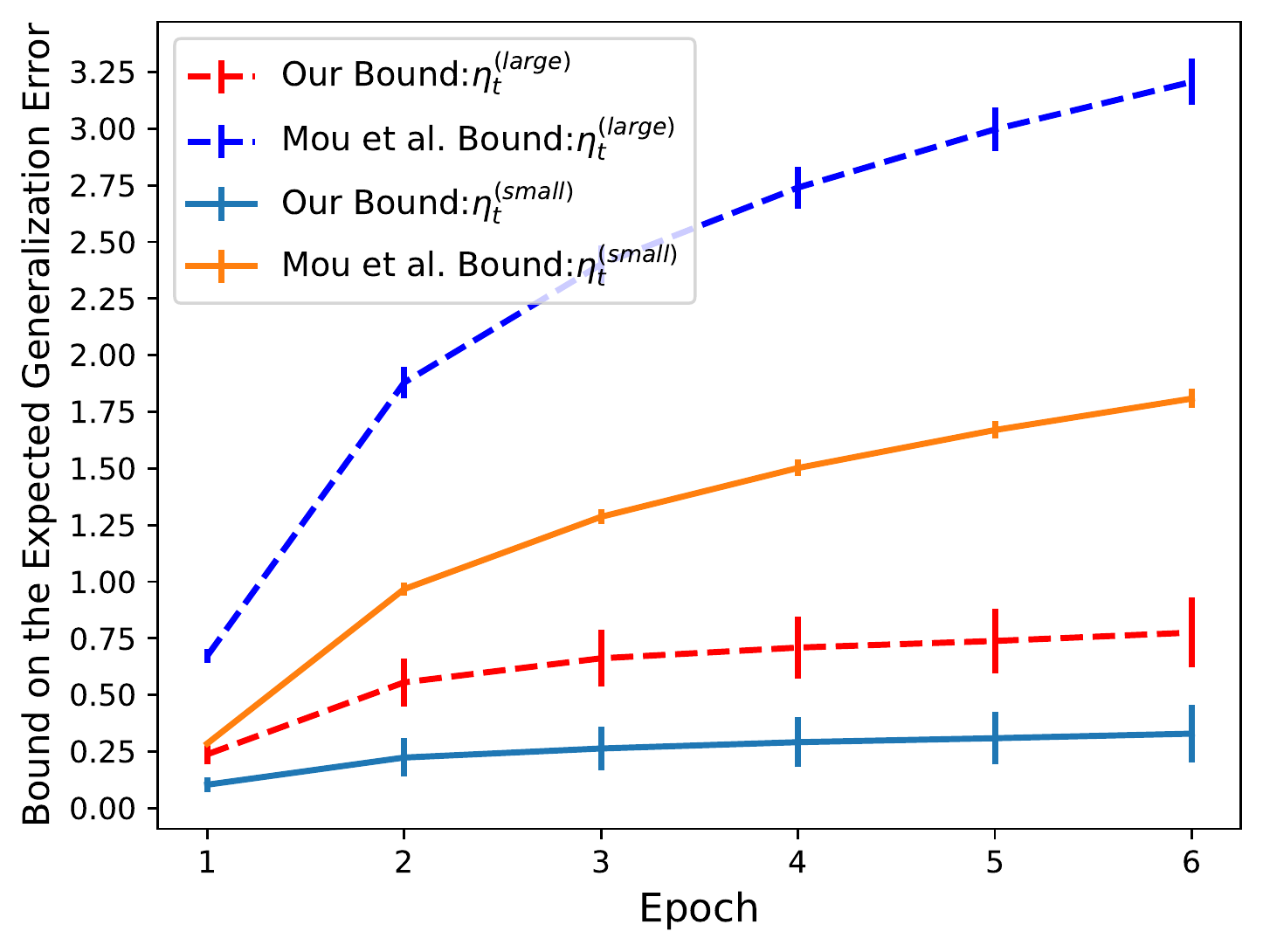}
      \caption{CNN for MNIST.}
      \label{fig:gen_bound_lr}
    \end{subfigure}
      
    \begin{subfigure}[h]{\figwidth}
    \centering
        \includegraphics[width=\textwidth, trim={2.2em 2em 4em 4em},clip]{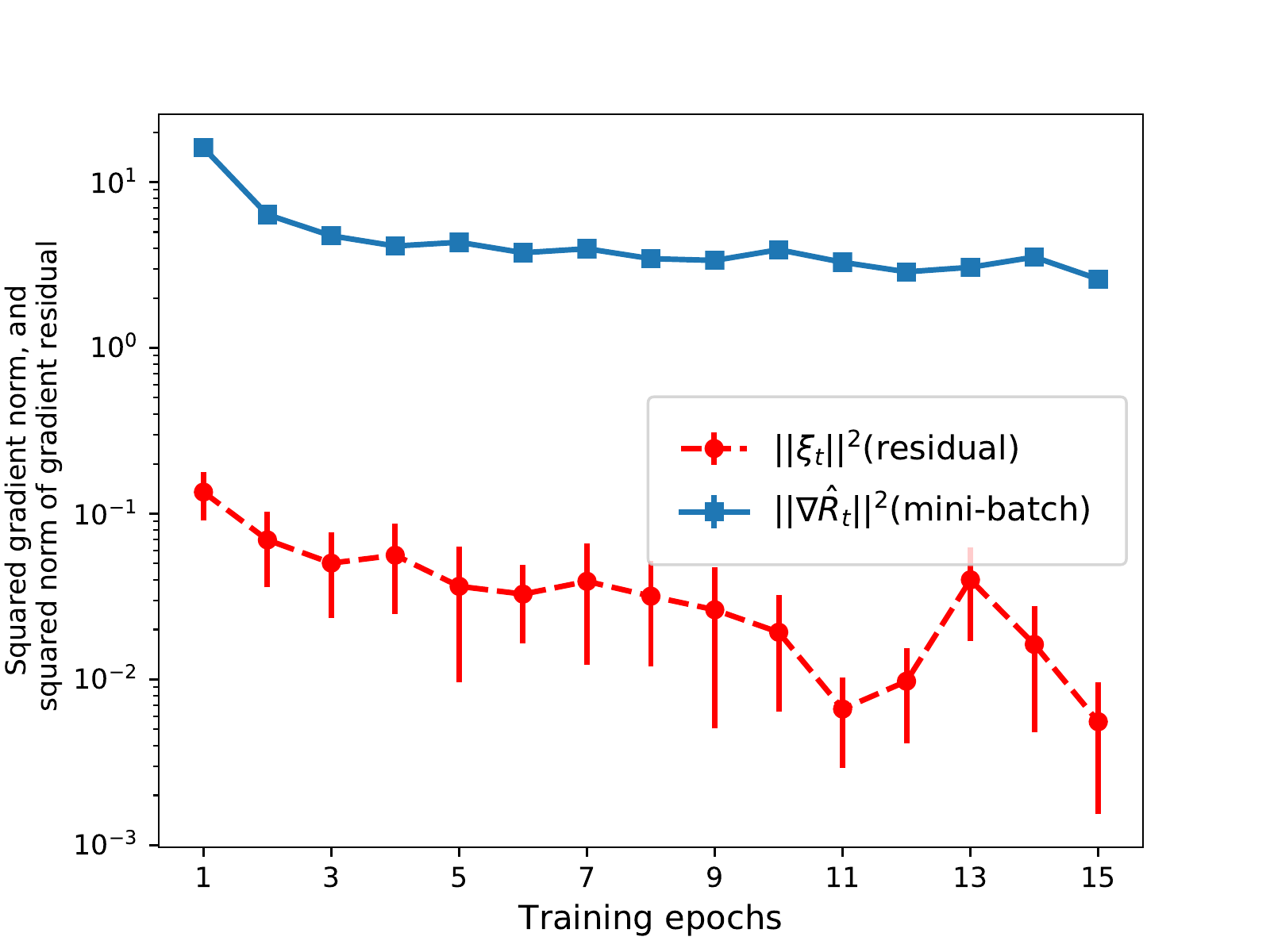}
        \caption{CNN for MNIST.}
        \label{fig:cnn_mnist}
    \end{subfigure}
    \begin{subfigure}[h]{\figwidth}
    \centering
        \includegraphics[width=\textwidth, trim={2.2em 2em 4em 4em},clip]{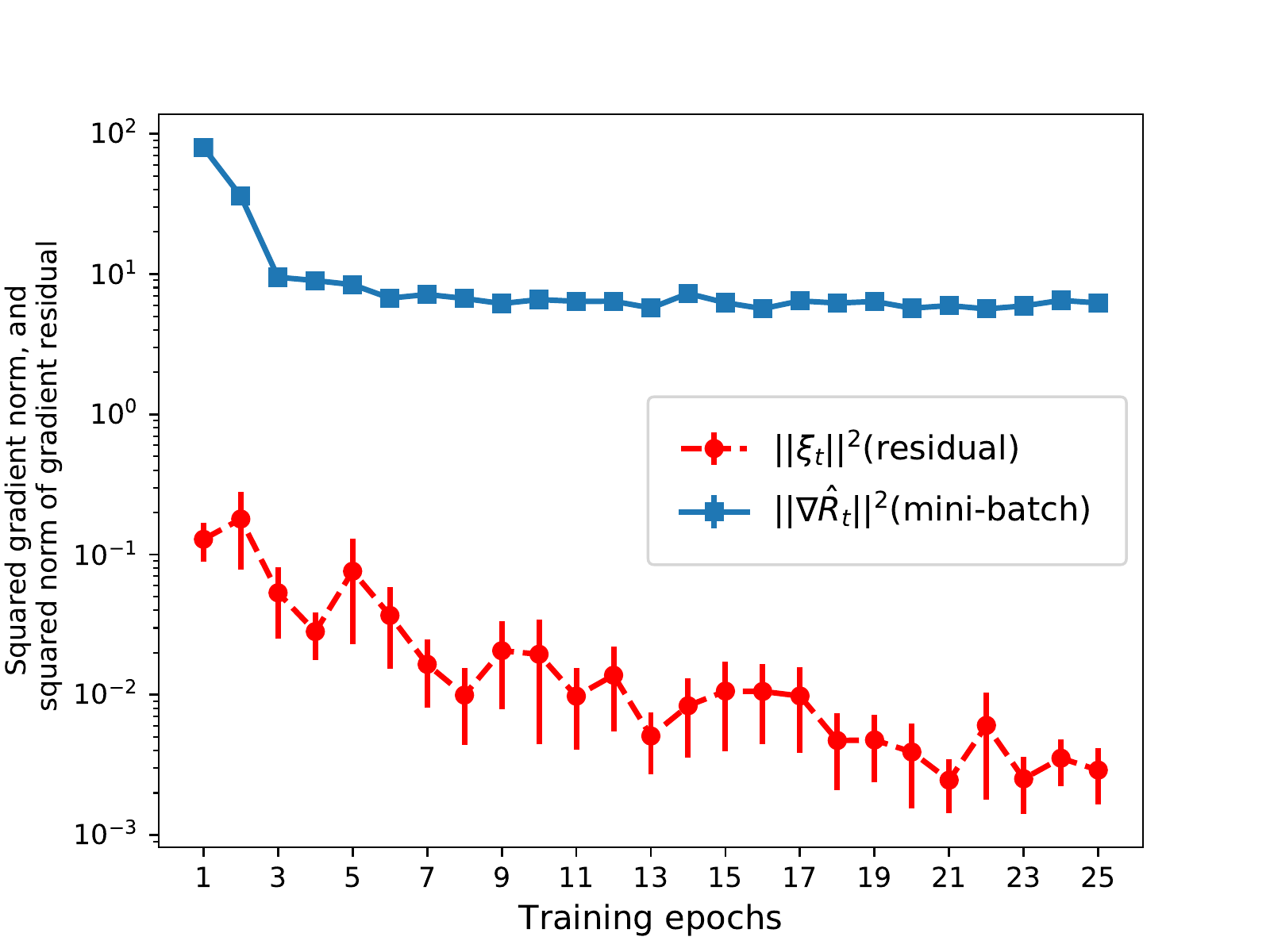}
        \caption{CNN for Fashion-MNIST.}
        \label{fig:cnn_fashion}
    \end{subfigure}%
    \begin{subfigure}[h]{\figwidth}
    \centering
        \includegraphics[width=\textwidth, trim={2.2em 2em 4em 4em},clip]{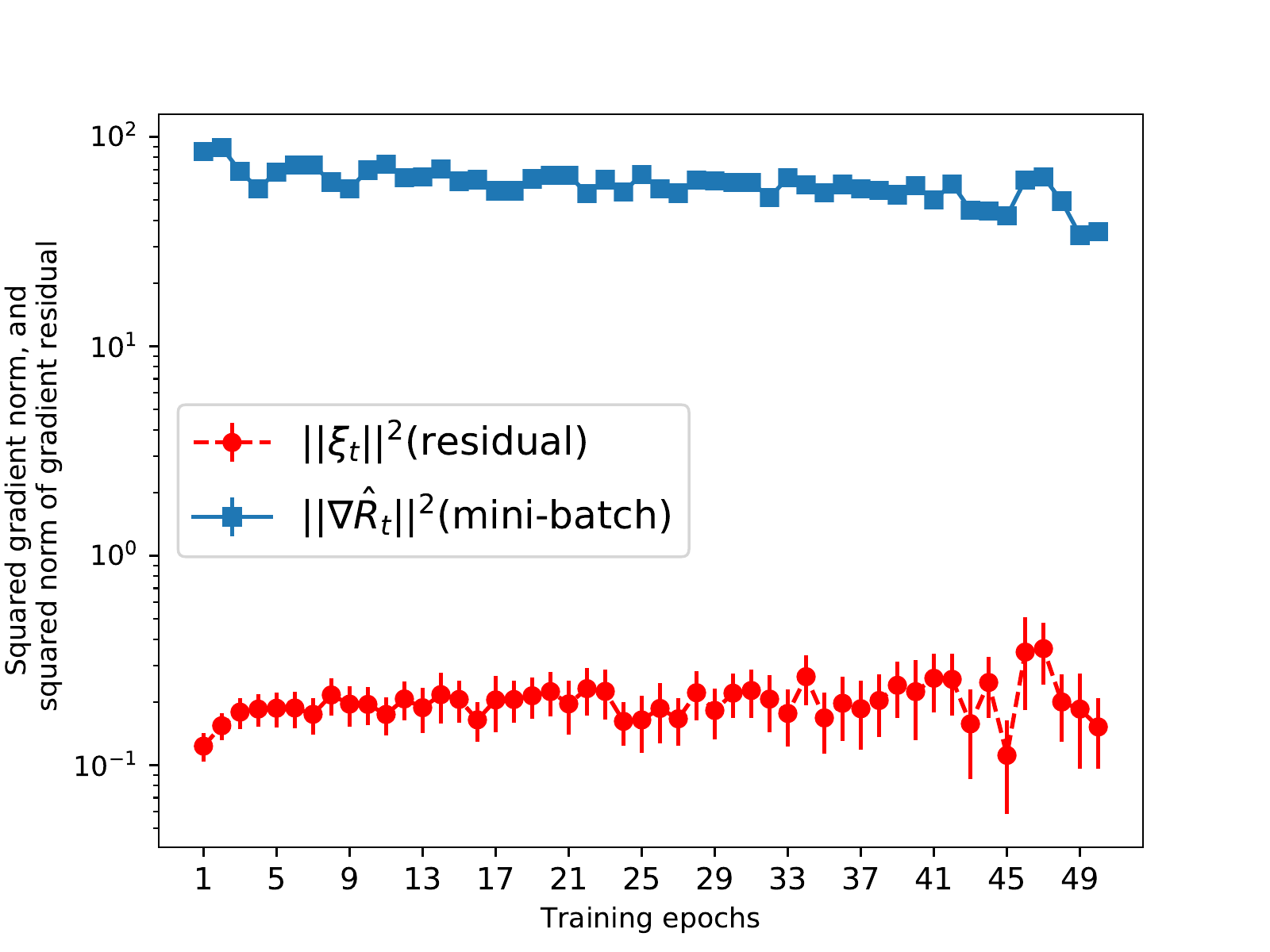}
        \caption{CNN for CIFAR-10.}
        \label{fig:cnn_cifar}
    \end{subfigure}
    \caption{\scriptsize
    Numerical results for various datasets and architectures. All $x$-axes show the number of Epochs of training.
    \cref{fig:dif_heldout} shows the effect of different amounts of heldout data on the summands appearing in our bound, and what those would be if we upper bounded the \emph{incoherence} $\norm{\xi}$ by $\norm{\grad\hat R}$ when it is not $0$. 
    \cref{fig:gen_bound_temp} compares a Monte Carlo estimate of our bound with that of \citep{pmlr-v75-mou18a} and shows the effect of inverse temperature on each. 
    \cref{fig:gen_bound_lr} compares a Monte Carlo estimate of our bound with that of \citep{pmlr-v75-mou18a} and shows the effect of learning rate on each. 
    \cref{fig:cnn_mnist,fig:cnn_fashion,fig:cnn_cifar} compare the summands appearing in our bound and those of \citep{pmlr-v75-mou18a} across datasets.
    }
    
    \label{fig:experiments}
\end{figure}
We have developed bounds that depend on the gradient prediction residual of our data dependent priors (which we call the \emph{incoherence} of the gradients), rather than on the gradient norms (as in \citep{pmlr-v75-mou18a}) or Lipschitz constants (as in \citep{PensiaJogLoh2018,BuZouVeeravalli2019}).
The extent to which this represents an advance is, however, an empirical question.
The functional form of our bounds and those in the cited work are nearly identical.
The first key differences between our work and others is the replacement of gradient norms ($\norm{{\bgrad{}{}}_t}^2)$ and Lipschitz constants in other work with gradient prediction residual, ($\norm{\xi_t}$) in our work.
The second key difference is the order of expectations and square-roots, which favor our bounds due to Jensen's inequality.
In this section, we perform an empirical comparison of the gradient prediction residual of our data dependent priors and the gradient norm across various architectures and datasets.
This illustrates the first of the differences, the quantities appearing in the bound.
Our results indicate that that our data-dependent priors yield significantly tighter results, as the sum of square gradient incoherences of our data dependent priors are between $10^2$ and $10^4$ times smaller than the sum of square gradient norms in the experiments we ran.

In \cref{fig:experiments}, we compare $\norm{\xi_t}^2$ and $\norm{{\bgrad{}{}}_t}^2$ in order to assess the improvement our methods bring over existing results for SGLD.
Specifically, the values of each plot are the averages of $\sqrt{\eta \beta}\norm{\xi_t}/b$ and $ \sqrt{\eta\beta}\norm{\bgrad{S_t}{}}/b$
over an epoch.
These serve as  estimates of the per-epoch contributions to the respective summations in our \cref{thm:sgld-bounds} and the bound of \citeauthor{pmlr-v75-mou18a} (Thm.~2 therein, when there is no $L_2$-regularization).
The average and standard error of both expressions taken over multiple runs are displayed.
Bounds from related work that depend on Lipschitz constants would further upper bound what we show for \citep{pmlr-v75-mou18a}, 
by replacing $\norm{{\bgrad{}{}}_t}$ with a Lipschitz constant.
The Lipschitz constant could be lower bounded by the largest observed gradient norm, and would be off the chart.

From \cref{fig:dif_heldout}, we see that the empirical performance reflects our analytical results that the bound is tighter for large $m$. As can be inferred from \cref{eq:xi_sgld}, the difference between $\norm{\xi_t}^2$ and $\norm{{\bgrad{}{}}_t}^2$ increases with $m$. 
From \cref{fig:cnn_mnist,fig:cnn_fashion,fig:cnn_cifar} we see that the squared gradient incoherence, $\norm{\xi_t}^2$, are between 100 and 10,000 times smaller than the squared gradient norms, $\norm{\bgrad{}{}}^2$ in all of these examples. 

Using Monte Carlo simulation, we compared estimates of our expected generalization error bounds with (coupled) estimates of the bound from \citep{pmlr-v75-mou18a}. The results, in \cref{fig:gen_bound_temp,fig:gen_bound_lr}, show that our bounds are materially tighter, and remain non-vacuous after many more epochs. \cref{fig:gen_bound_temp} also compares the two generalization error bounds for different inverse temperature schedules. \cref{fig:gen_bound_lr} compares the two generalization error bounds based for different learning rate schedules. 
It can inferred from \cref{fig:gen_bound_temp,fig:gen_bound_lr} that our proposed bound yields to tighter values when the learning rate and the inverse temperature are small. However, it should be noted that with small learning rate and the inverse temperature, it would be difficult to have a very low training error when the empirical risk minimization is performed using SGLD.   

The details of our model architectures, temperature, learning rate schedules and hyperparameter selections may be found in \cref{apx:experiments}. 
We did not aim to achieve the state-of-the art predictive performance.
With further tuning, the prediction results could be improved.

\subsection*{Acknowledgments}
{\normalsizeJN is supported by an NSERC Vanier Canada Graduate Scholarship, and by the Vector Institute.
MH was supported by a MITACS Accelerate Fellowship with Element AI.
DMR is supported by an NSERC Discovery Grant and an Ontario Early Researcher Award.
This research was carried out in part while GKD and DMR were visiting the Simons Institute for the Theory of Computing.
}
\printbibliography

\newpage
\appendix{}
\section{Common Definitions}\label{apx:definitions}

In this appendix, we collect together a few standard definitions from information theory.
Let $P,Q$ be probability measures on a common measurable space.
Write $Q \ll P$ when $Q$ is absolutely continuous with respect to $P$, i.e., for all measurable subsets $A$, $Q(A) =0$ if $P(A) = 0$.
By the Radon--Nikodym theorem, when $Q \ll P$, there exists a measurable function $\rnderiv{Q}{P}$,
called a Radon--Nikodym derivative or density, such that $Q(A) = \int_A \rnderiv{Q}{P} \dee P$ for all measurable subsets $A$.
The  \defn{KL divergence} (or \defn{relative entropy}) \defn{of $Q$ with respect to $P$},
written $\KL{Q}{P}$, is defined to be $\int \log \rnderiv{Q}{P} \dee Q$ when $Q \ll P$ and is defined to be infinity otherwise.

Given random elements $X$ and $Y$,
the \defn{mutual information between $X$ and $Y$}, written $\minf{X;Y}$ is
\*[
\minf{X;Y} = \KL{\Pr[(X,Y)]} { \Pr[X] \otimes \Pr[Y]},
\]
where $\otimes$ forms the product measure.
Given another random element $Z$, the \defn{conditional mutual information between $X$ and $Y$ given $Z$}
is defined to be
$\minf{X;Y \vert Z} = \minf{X; (Y,Z)} - \minf{X;Z} = \minf{(X,Z); Y} - \minf{Z;Y}$.

Relative entropy and mutual information satisfy many well-known properties: For example, relative entropy and mutual information are nonnegative;
$X\indep Y \iff \minf{X;Y} = 0$;
and $\minf{X;Y} \leq \minf{X;(Y,Z)}$.
From this last inequality, one may deduce that $\minf{X;Y} \leq \minf{X;Y\vert Z}$ when $X\indep Z$.

\section{Proofs of Results} \label{apx:proofs-1}

\subsection{Bounding Mutual Information by KL Divergence}\label{sec:variation-mi-kl}

The following is a well-known result that allows one to bound mutual information by the expectation of the KL divergence of a ``posterior'' with respect to a``prior'' (where these terms are taken to have their more general interpretation from PAC-Bayesian theory, as opposed to the classical Bayesian theory).

\begin{proposition}[Variational Representation of Mutual Information]\label{prop:variation-mi-kl}
Let $X$ and $Y$ be random elements.
Then,
for all probability measures $P$ on the same space as $Y$,
\*[
	\minf{X; Y} \leq \EE[\KL{\cPr{X}{Y}}{P}],
\]
with equality for $P= \EE[\cPr{X}{Y}]= \Pr[Y]$.
\end{proposition}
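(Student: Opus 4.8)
The plan is to prove the inequality through the standard \emph{compensation identity} for relative entropy, which rewrites the gap between the right-hand side and the mutual information as a single nonnegative KL divergence. First I would recall the disintegration (``golden'') identity for mutual information: writing $P_Y = \Pr[Y]$ for the marginal law of $Y$, I factor the joint law as $\Pr[(X,Y)] = \Pr[X]\,\cPr{X}{Y}$, so the Radon--Nikodym derivative of $\Pr[(X,Y)]$ with respect to $\Pr[X]\otimes P_Y$ at $(x,y)$ is the conditional density $\rnderiv{\cPr{X}{Y}}{P_Y}(y)$. Integrating its logarithm against $\Pr[(X,Y)]$ and applying Tonelli's theorem gives
\*[
\minf{X;Y} = \EE[\KL{\cPr{X}{Y}}{P_Y}].
\]
This identity holds in general, with both sides simultaneously finite or infinite; in the finite case $\minf{X;Y}<\infty$ forces $\cPr{X}{Y}\ll P_Y$ almost surely.

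Next, for an arbitrary probability measure $P$ on the space of $Y$, I would split the density along the chain $\cPr{X}{Y}\ll P_Y \ll P$. On this event the chain rule for Radon--Nikodym derivatives gives $\rnderiv{\cPr{X}{Y}}{P} = \rnderiv{\cPr{X}{Y}}{P_Y}\cdot\rnderiv{P_Y}{P}$ almost surely, so that
\*[
\KL{\cPr{X}{Y}}{P} = \KL{\cPr{X}{Y}}{P_Y} + \int \log \rnderiv{P_Y}{P}\, \dee \cPr{X}{Y}.
\]
Taking the expectation over $X$ and using the defining property $\EE[\cPr{X}{Y}(\cdot)] = P_Y(\cdot)$ of the marginal collapses the second term to $\int \log \rnderiv{P_Y}{P}\,\dee P_Y = \KL{P_Y}{P}$. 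Combined with the golden identity, this yields the compensation identity
\*[
\EE[\KL{\cPr{X}{Y}}{P}] = \minf{X;Y} + \KL{P_Y}{P}.
\]
Since relative entropy is nonnegative, $\minf{X;Y}\le \EE[\KL{\cPr{X}{Y}}{P}]$ follows at once, with equality exactly when $\KL{P_Y}{P}=0$, i.e.\ when $P = P_Y = \EE[\cPr{X}{Y}] = \Pr[Y]$.

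Finally I would dispatch the degenerate cases to make the argument airtight. If $P_Y\not\ll P$ then $\KL{P_Y}{P}=\infty$ and the right-hand side is infinite, so the bound is trivial; likewise if $\cPr{X}{Y}\not\ll P$ on a set of positive probability, the expected KL on the right is infinite. The main obstacle is precisely this measure-theoretic bookkeeping: justifying the factorization of the Radon--Nikodym derivatives and the interchange of expectation and integration \emph{uniformly} across the possibly infinite cases, and confirming that the golden identity is a genuine equality rather than merely an inequality. Once the almost-sure absolute-continuity chain $\cPr{X}{Y}\ll P_Y \ll P$ is in force, every manipulation reduces to an application of Tonelli's theorem together with $\EE[\cPr{X}{Y}] = P_Y$, and the conclusion is immediate.
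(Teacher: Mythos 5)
Your route---the compensation identity $\EE[\KL{\cPr{X}{Y}}{P}] = \minf{X;Y} + \KL{\Pr[Y]}{P}$ followed by nonnegativity of relative entropy---is the standard derivation of this proposition, and it is the same one the paper implicitly endorses: the paper gives no proof of its own, deferring to Kemperman, to Catoni (as folklore), and to Poole et al.'s Eq.~(1), whose derivation is precisely this decomposition. So there is no conflict of approach. There is, however, one concrete hole in your handling of the degenerate cases, and it is not mere bookkeeping.

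Your dispatch covers (i) $\Pr[Y] \not\ll P$ and (ii) $\cPr{X}{Y} \not\ll P$ on a set of positive probability, and you then declare the chain $\cPr{X}{Y} \ll \Pr[Y] \ll P$ ``in force.'' But ruling out (i) and (ii) does not give you the chain: there remains the case where $\cPr{X}{Y} \ll P$ almost surely and $\Pr[Y] \ll P$, yet $\cPr{X}{Y} \not\ll \Pr[Y]$ with positive probability (absolute continuity of a kernel with respect to its own mixture is \emph{not} automatic---witness $\cPr{X}{Y} = \delta_X$ with $X$ uniform, where it fails everywhere). This case is exactly where the proposition is at stake: by your own golden identity it forces $\minf{X;Y} = \infty$, while the right-hand side is not yet known to be infinite, so you must prove the case is vacuous rather than wave at it. The fix is short but requires an idea not in your write-up: let $g = \rnderiv{\Pr[Y]}{P}$ and $B = \set{g = 0}$. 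Since $\Pr[Y](B) = \EE[\cPr{X}{Y}(B)] = 0$, we get $\cPr{X}{Y}(B) = 0$ a.s.; now for \emph{any} $A$ with $\Pr[Y](A) = 0$ we have $\int_A g \,\dee P = 0$, hence $P(A \setminus B) = 0$, hence on the single almost-sure event where $\cPr{X}{Y} \ll P$ and $\cPr{X}{Y}(B) = 0$, $\cPr{X}{Y}(A) \le \cPr{X}{Y}(A\setminus B) + \cPr{X}{Y}(B) = 0$. Because $B$ is one fixed set, the exceptional null set does not depend on $A$, which is exactly the quantifier exchange that fails in general and succeeds here. With this lemma, ``$\cPr{X}{Y} \ll P$ a.s.'' implies the full chain a.s., and your argument closes. (A smaller point of the same flavor: your appeal to Tonelli for the golden identity needs the standard remark that the negative part of each log-density integrand is uniformly integrable, e.g.\ bounded via $t\log(1/t) \le 1/\e$, since the integrands are not nonnegative.)
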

The result is implicit in \citep{kemperman1974shannon} and is considered folklore in the literature (e.g., it is referenced without proof in \citep{Catoni2007}).
For a simple derivation, see \citep[Eq.~(1)]{poole2019variational}.
Given another random element $Z$, it follows immediately by the disintegration theorem \citep[Thm.~6.4]{FMP2}
that, for all $Z$-measurable random probability measures $P$ on the same space as $Y$,
\*[
	\dminf{Z}{X; Y} \leq \cEE{Z}[\KL{\cPr{X,Z}{Y}}{P}]\ \textrm{a.s.,}
\]
with a.s. equality for $P= \cEE{Z}[\cPr{X,Z}{Y}]= \cPr{Z}{Y}$.

\subsection{Proofs of Main Results}
\begin{proof}[Proof of \cref{thm:dat-dep-mi}]
Let $\tilde{W}$ be a random element in $\parspace$ such that $W \eqdist \tilde{W}$ and $\tilde{W} \indep S_J^c$.
Let $\mathcal{G}$ denote the class of all functions $g$ such that $\EE \exp(g(\tilde W, S_J^c)) < \infty$.
Then
\begin{align}
\minf{W;S_J^c} &= \KL{\Pr(W, S_J^c)}{\Pr(\tilde{W} , S_J^c)} \\
&= \sup_{g \in \mathcal{G}} \, \EE g(W, S_J^c) - \log \EE e^{g(\tilde{W} , S_J^c)}
\end{align}
where the second equality follows from the Donsker--Varadhan variational formula \citep[Prop.~4.15]{boucheron2013concentration} (see also \citep{donsker1975asymptotic}).
Let $f(w,s)= \Risk{\Dist}{w} -\EmpRisk{s}{w}$ so that
$\EE f(W,S_J^c) =\EE \Risk\Dist{W} - \EE \EmpRisk{S_J^c}{W}$ and $\EE f(\tilde{W}, S_J^c) = 0$.
Let $\psi$ be the cumulant generating function of ${f(\tilde{W} , S_J^c)}$ and let
$D$ be the domain on which this cumulant generating function is defined.
Then $\lambda f\in \mathcal{G}$ exactly when  $\lambda\in D$.
Then, for every $\lambda \in D$,
\begin{align}
 \sup_{g \in \mathcal{G}} \, \EE g(W, S_J^c) - \log \EE e^{g(\tilde{W} , S_J^c)}  &\geq \lambda \EE f(W,S_J^c) - \log \EE e^{\lambda f(\tilde{W} , S_J^c)} \\
 &=   \lambda \EE\sbra{ \Risk{\Dist}{ W} - \EmpRisk{S_J^c}{W} } - \psi(\lambda).
\end{align}
By rearranging and optimizing over $\lambda$, we find that
\*[
	\EE\sbra{\Risk{\Dist}{ W} - \EmpRisk{S_J^c}{W}} \leq \inf_{\lambda\in D} \frac{\psi(\lambda)+ \minf{W;S_J^c} }{\lambda } .
\]
Because the subset $J$ is random and independent of $(S,W)$, we have $\EE\EmpRisk{S_J^c}{W} = \EE\EmpRisk{S}{W}$.
Hence,
\*[
	\EE\sbra{\Risk{\Dist}{W} - \EmpRisk{S}{W}} = \EE\sbra{\Risk{\Dist}{W} - \EmpRisk{S_J^c}{W}} \leq  \inf_{\lambda\in D} \sbra{  \frac{\psi(\lambda)+ \minf{W;S_J^c} }{\lambda }} .
\]

At this point we have established a slightly more abstract result that permits applications beyond the subgaussian case.
By the subgaussian hypothesis, $f(w,S_J^c)$ is itself $\sigma_{n-m}$-subgaussian for each $w\in\parspace$,
and so the bound above reduces to
\*[
	\EE\sbra{\Risk{\Dist}{ W} - \EmpRisk{S}{W}}
		\leq \sqrt{2 \sigma^2_{n-m} \minf{W;S_J^c}}
\]
using the same optimization argument as in \citep{BuZouVeeravalli2019}, \citep{XuRaginsky2017}, etc.
From the proof of \cref{thm:XR-mi-subg}, $\sigma_{n-m}\leq \frac{\sigma}{\sqrt{n-m}}$,
completing the proof.
\end{proof}

\begin{proof}[Proof of \cref{thm:dat-dep-dmi}]
Let $\tilde{W}$ be a random element in $\parspace$ such that $(W,S_J,\RS) \eqdist (\tilde{W},S_J,\RS)$ and $\tilde{W} \indep S_J^c \given \{{S_J,\RS}\}$.
Let $Q$ and $P$ satisfy $Q(S_J,\RS) = \cPr{S_J,\RS}{W, S_J^c}$ and $P(S_J,\RS) = \cPr{S_J,\RS}{\tilde W, S_J^c}$ a.s.
By the Donsker--Varadhan variational formula \citep[Prop.~4.15]{boucheron2013concentration} and the disintegration theorem \citep[Thm.~6.4]{FMP2},
with probability one, for all measurable functions $g$ such that $P(S_J,\RS)(\exp g) < \infty$,
 \*[
   \dminf{S_J,\RS}{W;S_J^c}
     &= \KL{ Q(S_J,\RS) }{ P(S_J,\RS) } \\
 		&\le Q(S_J,\RS)(g) - \log P(S_J,\RS)(\exp g).
 \]

Let $f(w,s) = \Risk\Dist{w} -\EmpRisk{s}{w}$.
Note that, a.s., $P(S_J,\RS)(f) = \cEE{S_J,\RS}[f(\tilde{W}, S_J^c)] = 0$
and
\*[
Q(S_J,\RS)(f) = \cEE{S_J,\RS}[f({W}, S_J^c)] = \cEE{S_J,\RS} [ \Risk\Dist{W} - \EmpRisk{S_J^c}{W}].
\]
Let $\psi$ be the cumulant generating function of $P(S_J,\RS)$, i.e.,
$\psi(\lambda; S_J,\RS) = \log P(S_J,\RS)( \exp \{ \lambda f \})$.
Let $D(S_J,\RS) = \set{ \lambda \in \Reals :  \psi(\lambda; S_J,\RS)  < \infty }$.
Then, with probability one,
for all $\lambda \in D(S_J,\RS)$,
\*[
  \dminf{S_J,\RS}{W;S_J^c}
 		&\geq \lambda \cEE{S_J,\RS} \sbra{ \Risk{\Dist}{W} - \EmpRisk{S_J^c}{W}} - \psi(\lambda; S_J,\RS)
.
\]
Rearranging, with probability one,
\*[
\cEE{S_J,\RS}\sbra{ \Risk{\Dist}{W} - \EmpRisk{S_J^c}{W}}
\le \inf_{\lambda \in D(S_J,\RS)} \
    \frac  { \dminf{S_J,\RS}{W;S_J^c}  + \psi(\lambda; S_J,\RS) }
             { \lambda }.
\]
Because $W\indep J$ and the subset $J$ is random and uniformly distributed,
\*[
\EE\sbra{\Risk{\Dist}{W} - \EmpRisk{S}{W}}
&= \EE \ \cEE{S_J,\RS}\sbra{\Risk{\Dist}{W} - \EmpRisk{S_J^c}{W}}
\\&\le \EE \sbra{ \inf_{\lambda \in D(S_J,\RS)} \
    \frac  { \dminf{S_J,\RS}{W;S_J^c}  + \psi(\lambda; S_J,\RS) }
             { \lambda } }.
\]

At this point we have established a slightly more abstract result that permits applications beyond the subgaussian case.
By the subgaussian hypothesis, $f(w,S_J^c)$ is itself $\sigma_{n-m}$-subgaussian for each $w\in\parspace$,
and so the bound above reduces to
\*[
	\EE\sbra{\Risk{\Dist}{ W} - \EmpRisk{S}{W}}
		\leq \EE \sqrt{2 \sigma^2_{n-m} \dminf{S_J,\RS}{W;S_J^c}}
\]
using the same optimization argument as in \citep{BuZouVeeravalli2019}, \citep{XuRaginsky2017}, etc.
From the proof of \cref{thm:XR-mi-subg}, $\sigma_{n-m}\leq \frac{\sigma}{\sqrt{n-m}}$,
completing the proof.
\end{proof}

\begin{proof}[Proof of \cref{thm:dat-dep-pacb}]
For any two random measures $P({S_J},\RS),Q(S,\RS)$, the Donsker--Varadhan variational formula \citep[Prop.~4.15]{boucheron2013concentration} and the disintegration theorem \citep[Thm.~6.4]{FMP2}, give that
with probability one
\*[
	\KL{Q(S,\RS)}{P({S_J},\RS)} \geq \sup_{g\in \mathcal{G}} \rbra{ Q(S,\RS)(g) - P({S_J},\RS)(g) - \log \sbra{P({S_J},\RS)\rbra{ \exp(g-P({S_J},\RS)(g))} }},
\]
where $\mathcal{G}({S_J},\RS) = \set{g : P({S_J},\RS)(\exp g)<\infty}$.

Taking $g(w) = \lambda\rbra{\Risk{\Dist}{w} - \EmpRisk{S_J^c}{w} }$, and letting
\*[
\Risk\Dist Q
	& = Q(S,\RS)(\Risk\Dist{})
		& \Risk\Dist P
			& = P(S_J,\RS)(\Risk\Dist{}) \\
\EmpRisk{S_J^c} Q
	& = Q(S,\RS)(\EmpRisk{S_J^c} {})
		& \EmpRisk{S_J^c} P
			& = P(S_J,\RS)(\EmpRisk{S_J^c} {})
\]
where, for brevity, we have used the short hand $Q=Q(S,\RS)$ and $P=P({S_J,\RS})$.
Then, with probability one
\*[
	&\KL{Q(S,\RS)}{P({S_J},\RS)}\\
		& \qquad \geq \lambda\rbra{\Risk\Dist Q - \EmpRisk{S_J^c} Q - \rbra{\Risk\Dist P - \EmpRisk{S_J^c} P}} \\
		& \qquad\qquad - \log\sbra{P(S_J,\RS)\rbra{ \exp\rbra{\lambda\rbra{\Risk\Dist {} - \EmpRisk{S_J^c} {} -\rbra{\Risk\Dist P - \EmpRisk{S_J^c} P}}}}}
\]
Let
\*[
	\psi(\lambda; S,J,\RS) = \log\sbra{P(S_J,\RS)\rbra{ \exp\rbra{\lambda\rbra{\Risk\Dist {} - \EmpRisk{S_J^c} {} -\rbra{\Risk\Dist P - \EmpRisk{S_J^c} P}}}}},
\]
and $D(S,J,\RS) = \set{\lambda\in\Reals : \psi(\lambda; S,J,\RS)<\infty}$.
With probability one
\*[
	\rbra{\Risk\Dist Q - \EmpRisk{S_J^c} Q - \rbra{\Risk\Dist P - \EmpRisk{S_J^c} P}}
		&\leq \inf_{\lambda\in D(S,J,\RS)} \frac{\KL{Q(S,\RS)}{P({S_J},\RS)}  + \psi(\lambda; {S,J,\RS})}{\lambda}
\]
Since $P(S_J,\RS)$ is independent of $S_J^c$ then we have $\cEE{S_J,J,\RS}\sbra{\Risk\Dist P - \EmpRisk{S_J^c} P} = 0$.
Hence, by averaging over $S_J^c$ (equivalently, taking the conditional expectation conditional on $(S_J,J,\RS)$) we have, with probability one
\*[
	\cEE{S_J,J,\RS} \sbra{ \Risk\Dist {Q} - \EmpRisk{S_J^c} {Q}}
		& = \cEE{{S_J}, J, \RS} \sbra{ \Risk\Dist {Q} - \EmpRisk{S_J^c} {Q} - \rbra{\Risk\Dist P - \EmpRisk{S_J^c} P}}  \\
		& \leq \cEE{S_J,J,\RS} \sbra{\inf_{\lambda\in D(S,J,\RS)} \frac{\KL{Q(S,\RS)}{P({S_J},\RS)}  + \psi(\lambda; {S,J,\RS})}{\lambda}}
\]
Finally, by taking the full expectation, since $J\indep Q(S,\RS)$ we get:
\*[
	\EE \sbra{ \Risk\Dist {Q(S,\RS)} - \EmpRisk{S} {Q(S,\RS)}}
		& \leq \EE \sbra{\inf_{\lambda>0} \frac{\KL{Q(S,\RS)}{P({S_J,\RS})}  + \psi_{S,J,\RS}(\lambda)}{\lambda}}
\]
where the final $\KL{Q(S,\RS)}{P(S_J,\RS)}$ on the right hand side is between two random measures, and hence is a random variable depending on $(S,J,\RS)$; and the expectation on the right hand side integrates over $(S,J,\RS)$.

If, for $(V\given S_J,\RS) \sim P(S_J,\RS)$ it is the case that $\rbra{\Risk{\Dist}{{V}} - \EmpRisk{S_J^c}{{V}} }$ is $\sigma$-subgaussian for any $(S,J,\RS)$, then this can be optimized to get
\*[
	\EE \sbra{ \Risk\Dist {Q(S,\RS)} - \EmpRisk{S} {Q(S,\RS)}}
		& \leq \EE\sqrt{2 \sigma^2 \ \KL{Q(S,\RS)}{P(S_J,\RS)} }
\]
When the loss is $[a_1,a_2]$-bounded then $\Risk{\Dist}{{V}} - \EmpRisk{S_J^c}{{V}}$  is $\frac{a_2-a_1}{2}$ subgaussian  which completes the proof.

\end{proof}
\begin{remark}[Why does \cref{thm:dat-dep-pacb} use a boundedness assumption instead of a subgaussian assumption?]\label{rem:bounded-but-why?}
    Note that we needed the boundedness assumption because even if, for $Z\sim\Dist$, $\loss(Z,w)$ was subgaussian (uniformly in $w\in\parspace$) it may not be the case that for $(V\given S_J,\RS) \sim P(S_J,\RS)$, $\rbra{\Risk{\Dist}{{V}} - \EmpRisk{S_J^c}{{V}} }$ is subgaussian.
    In contrast, in the proofs of \cref{thm:dat-dep-mi}, \cref{thm:dat-dep-dmi}, and \cref{thm:XR-mi-subg} the expectations over $S_J^c$ included in the definition of the required cumulant generating functions let us take advantage of the subgaussian property of $\loss(Z,w)$.
\end{remark}

\begin{proof}[Proof of \cref{prop:kl-decomp}]
\*[
\KL{Q_T}{P_T}
\leq \KL{Q_T}{P_T} + \EE\KL{\conditionalback{Q}}{\conditionalback{P}}
= \KL{Q}{P}.
\]
This tells us that the KL divergence between marginal distributions of the terminal parameter is upper bounded by the KL between the distributions of the full trajectories.

Assuming $Q_0=P_0$, we may decompose $\KL{Q}{P}$ across iterations, obtaining
\begin{align}
    \KL{Q}{P}
		& = \EEE{W\sim Q}\sbra{ \log\rnderiv{Q}{P}(W)}
		 = \EEE{W\sim Q}\sbra{ \sum_{t=1}^T\log\rnderiv{\conditional{Q}}{\conditional{P}}(W)}
    = \sum_{t=1}^{T} \EE_{Q_{0:(t-1)}} \brackets{ \KL{\conditional{Q}}{\conditional{P}} }.
\end{align}

\end{proof}

\section{Mutual Information Bound for Subgaussian Losses}
\begin{theorem}[Xu and Raginsky's Theorem 1]\label{thm:XR-mi-subg}
Suppose that $\loss(w,Z)$ is $\sigma$-subgaussian when $Z \sim \mu$, for all $w\in\mathcal{W}$, 
Then
\*[
	\abs{\EE \sbra{\Risk{\Dist}{W} - \EmpRisk{S}{W}}} \leq \sqrt{\frac{2\sigma^2}{n} I(S;W) }
\]
\end{theorem}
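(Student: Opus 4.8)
The plan is to recognize this as the $m=0$ specialization of \cref{thm:dat-dep-mi} and to reproduce its Donsker--Varadhan argument in the simplest setting. First I would introduce a decoupled copy $\tilde W$ of the learned parameter satisfying $\tilde W \eqdist W$ and $\tilde W \indep S$, so that $\Pr[(W,S)]$ and $\Pr[W]\otimes\Pr[S]$ are exactly the pair of measures whose relative entropy equals $\minf{S;W}$. Writing $f(w,s) = \Risk{\Dist}{w} - \EmpRisk{s}{w}$, the target is $\EE f(W,S) = \EE\sbra{\Risk{\Dist}{W} - \EmpRisk{S}{W}}$, while independence together with $\EE_S \EmpRisk{S}{w} = \Risk{\Dist}{w}$ for each fixed $w$ gives $\EE f(\tilde W, S) = 0$.

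Next I would invoke the Donsker--Varadhan variational formula, exactly as in the proof of \cref{thm:dat-dep-mi}, with the one-parameter family of test functions $g = \lambda f$ for $\lambda \in \Reals$, to obtain
\*[
\minf{S;W} \ge \lambda\,\EE f(W,S) - \log \EE e^{\lambda f(\tilde W, S)} .
\]
The crux is controlling the cumulant generating function on the right. For each fixed $w$, the empirical risk $\EmpRisk{S}{w} = \tfrac1n\sum_{i} \loss(Z_i,w)$ is an average of $n$ i.i.d.\ $\sigma$-subgaussian terms, so $\Risk{\Dist}{w} - \EmpRisk{S}{w}$ is $\tfrac{\sigma}{\sqrt n}$-subgaussian; since $\tilde W \indep S$, averaging over $\tilde W$ preserves this, yielding $\log \EE e^{\lambda f(\tilde W, S)} \le \tfrac{\lambda^2 \sigma^2}{2n}$.

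Finally, substituting this bound, rearranging, and optimizing the resulting inequality $\EE f(W,S) \le \minf{S;W}/\lambda + \lambda\sigma^2/(2n)$ over $\lambda > 0$ gives the minimizer $\lambda = \sqrt{2n\,\minf{S;W}/\sigma^2}$ and hence $\EE f(W,S) \le \sqrt{2\sigma^2 \minf{S;W}/n}$. Repeating the argument with $-f$ (equivalently $\lambda < 0$) controls the opposite sign and produces the stated absolute value. I expect the only genuinely technical step to be the tensorization of subgaussianity that produces the $1/n$ factor; the remainder is the now-standard variational template used verbatim in the appendix proofs above, and the only care needed is to confirm that the optimizing $\lambda$ lies in the domain on which the cumulant generating function is finite — which holds here because subgaussianity guarantees finiteness for every $\lambda\in\Reals$.
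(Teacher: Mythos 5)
Your proposal is correct and follows essentially the same route as the paper's proof: Donsker--Varadhan applied to the decoupled pair $(\tilde W, S)$, a per-$w$ tensorized subgaussian bound on the cumulant generating function integrated over the independent $\tilde W$ via the tower property, and optimization over $\lambda$ of both signs. In particular, your choice of $f(w,s)=\Risk{\Dist}{w}-\EmpRisk{s}{w}$, which has mean zero in $S$ for every fixed $w$, is exactly the centering (the paper's $\check f$) that the paper's proof uses to sidestep the flaw it identifies in the original argument---namely that a mixture of subgaussian variables with differing means need not be subgaussian---so your "averaging over $\tilde W$ preserves this" step is legitimate here precisely because all the per-$w$ moment generating function bounds are centered at the common value $0$.
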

A proof of this result is found in \citep{XuRaginsky2017}.
However, one may use the arguments therein to establish the further conclusion that $\loss(W,Z)$ or $\Risk{S}{W}$ is also subgaussian, which is not generally true.
In this section we briefly describe the flaw in that logic and provide a clarification of their proof under the same assumptions.
\citep{RussoZou15} give a proof for discrete parameter spaces, which does not contain this flaw. 
While it is straightforward to cast their proof into measure-theoretic language, we give the details for completeness.

The discussion in \citep{XuRaginsky2017} preceding the theorem asserts that if $f: \mathcal{W} \times \mathcal{S}$ is such that $f(w,S)$ is $\sigma$ subgaussian for all $w\in\mathcal{W}$ and if $W\ind S$ then $f(W,S)$ is $\sigma$-subgaussian.
A simple counter example is given by $\mathcal{W} = \mathcal{S} = \Reals$, with $f(w,s)= w+s$, and $(W,S)\sim\text{Cauchy}\times N(0,1)$.
In this case $f(w,S)$ is clearly $1$-subgaussian for each $w\in\mathcal{W}$, while $f(W,S)$ does not even have bounded absolute first moment, let alone a moment generating function defined in any open ball about $0$.

The main issue in the argument establishing subgaussianity of $f(W,S)$ is failing to properly use a version of the conditional variance formula (modified to apply for moment generating functions as opposed to variances).
The intuition of the conditional variance formula is useful in reconciling the final result with our counterexample, but is not sufficient for a general proof as the subgaussian parameter is not generally a standard deviation.
The conditional variance formula asserts that
\*[
	\Var(f(W,S)) = \EE \sbra{\Var^W f(W,S)} + \Var\rbra{ \cEE{W} f(W,S)}.
\]
The argument by which one would conclude that $f(W,S)$ is subgaussian only acknowledges the first term, thus assuming that the second term is $0$ (which would only hold when $\cEE{W} f(W,S)$ is a.s.\ constant in $W$).

More precisely, since we are working with subgaussian parameters instead of true standard deviations:
\*[
&\hspace{-1em}\log \EE \exp(t (f(W,S) -\EE f(W,S))) \\
	& = \log \EE \sbra{ \exp(t (\cEE{W} f(W,S) - \EE f(W,S))) \cEE{W} \exp(t (f(W,S) - \cEE{W} f(W,S)))}\\
	& \leq \log \exp(t^2 \sigma^2/2) \EE  [\exp(t (\cEE{W} f(W,S) - \EE f(W,S)))  ]\\
	& = t^2 \sigma^2/2 + \log \EE  [\exp(t (\cEE{W} f(W,S) - \EE f(W,S)))  ]
\]
The RHS is $\geq t^2 \sigma^2/2$ with equality if and only if $(\cEE{W} f(W,S) - \EE f(W,S))$ is constant (by Jensen' inequality). 
The first inequality is an equality when $f(w,S)$ is normal with variance $\sigma^2$ for all $w\in\mathcal{W}$.

Ergo, the assertion that $f(W,S)$ is $\sigma$-subgaussian holds exactly when $(\EE_S f(W,S) - \EE f(W,S))$ is constant.
This situation is not generally of interest in learning theory; 
this amounts to saying that all parameter vectors lead to the same expected generalization error, and hence there is no purpose to learning from the data!

The final result is, of course, still valid and may be proven directly via the Donsker--Varadhan variational formula.

\begin{proof}
As in \citep{XuRaginsky2017} we will leverage the fact that for each $w\in\mathcal{W}$, $f(w,S) = \frac{1}{n}\sum_{i=1}^n \loss(w,Z_i)$ is $\tau = \sigma / \sqrt{n}$ subgaussian, \emph{however these variable may have different means for each value of $w$}.
Let $\check f(w,s) = f(w,s) - \EE f(w,S)$.

By Donsker--Varadhan and the fact that $\cEE{W} \check f(\bar W, \bar S) = 0$ a.s.,
\*[
	I(W; S)
		& \geq \EE \lambda \check f(W,S) - \log \EE \exp(\lambda \check f(\bar W, \bar S)) \\
		& \geq \EE \lambda \check f(W,S) - \log \EE \cEE{W} \exp(\lambda \check f(\bar W, \bar S)) \\
		& \geq \lambda \EE \check f(W,S) - \log \EE \exp(\lambda^2 \tau^2 /2) \\
		& \geq \lambda \EE \check f(W,S) - \lambda^2 \tau^2 /2.
\]
Optimizing over $\lambda$ now yields the desired result, because
\*[
   \abs{\EE \check f(W,S)} 
    = \abs{\EE \sbra{f(W,S) - \cEE{W} f(W,\bar S)}} 
    = \abs{\EE \sbra{\Risk{\Dist}{W} - \EmpRisk{S}{W}}} \ .
\]
\end{proof}

\section{Properties of the Hypergeometric Distribution and of Finite Population Variances}

In this section, we enumerate a number of well-known results, and also derive some particular ones for our application.

\subsection{Properties of the Hypergeometric Distribution}

Let $n,m,b \in \Nats$, $m,b  \le n$. 
Write $B \sim \text{HG}(n,m,b)$ when
\*[
    	\Pr(B = j)
    		& = \frac{{{m}\choose{j}} {{n-m}\choose{b-j}} }{{{n}\choose{b}}}, \quad j \in \{{0 \lor b + m - n, \dots, n \land m }\}.
\]
It follows that
    \*[
    	\EE(B)
    		& = b \frac{m}{n} 
    	\Var(B)
    		& = b \frac{m}{n} \frac{n-m}{n} \frac{n-b}{n-1} \leq b \frac{m(n-m)}{n^2}
\]

\subsection{Finite Population Statistics with Disjoint Samples}
In this section we compute the covariance of the sample means for each population, and provide a formula for the variance of a linear combination of the two estimators.

\begin{lemma}[Variance for disjoint finite population statistics]
\label{lem:fp-disjoint-var}
Suppose that there is a finite population of size, $N$, $S = (y_1,...,y_N)$.
Consider two disjoint subsets of sizes $n_1$ and $n_2$ are chosen uniformly at random from $S$.
Let $\bar Y_i$ be the sample mean on the $i$th sample.
Let $\Sigma$ be the population variance matrix.
Then
\*[
	\Var\left({\begin{matrix}\bar Y_1 \\  \bar Y_2 \end{matrix}}\right)
		& = \frac{1}{N-1} \left[{\begin{matrix}(N-n_1)/n_1 & -1 \\ -1 & (N-n_2)/n_2\end{matrix} }\right]\otimes \Sigma
\]

\*[
	\Var(a \bar Y_1 - b\bar Y_2)
		& = \frac{1}{(N-1)} \left({ -(a-b)^2 + N (a^2/n_1 +b^2/n_2)}\right) \Sigma
\]
\end{lemma}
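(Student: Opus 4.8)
The plan is to reduce both identities to a single moment computation for sampling-inclusion indicators, working directly with vector-valued $y_i$ so that the Kronecker structure emerges automatically. Write the population mean as $\bar y = \frac1N\sum_{i=1}^N y_i$ and recall that, consistent with the $Z\sim\unif{S}$ convention used elsewhere in the paper, $\Sigma = \frac1N\sum_{i=1}^N (y_i-\bar y)(y_i-\bar y)^\top$. For $k\in\{1,2\}$ and $i\in\{1,\dots,N\}$, let $I_i^{(k)}$ be the indicator that $y_i$ belongs to the $k$-th sample, so that $\bar Y_k = \frac{1}{n_k}\sum_i I_i^{(k)} y_i$ and $\EE[\bar Y_k]=\bar y$. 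Because each sample is drawn uniformly without replacement and the two samples are disjoint (equivalently, draw a uniform permutation and take the first $n_1$ items for sample $1$ and the next $n_2$ for sample $2$), these indicators satisfy $\EE[I_i^{(k)}]=n_k/N$, $\EE[I_i^{(1)}I_i^{(2)}]=0$, and, for $i\neq j$, $\EE[I_i^{(1)}I_j^{(1)}]=\frac{n_1(n_1-1)}{N(N-1)}$ and $\EE[I_i^{(1)}I_j^{(2)}]=\frac{n_1 n_2}{N(N-1)}$.

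First I would assemble the off-diagonal block. Expanding $\EE[\bar Y_1\bar Y_2^\top]=\frac{1}{n_1 n_2}\sum_{i,j}\EE[I_i^{(1)}I_j^{(2)}]\,y_i y_j^\top$, the diagonal terms $i=j$ vanish by disjointness while the $i\neq j$ terms all carry the constant $\frac{n_1 n_2}{N(N-1)}$. Substituting $\sum_{i\neq j} y_i y_j^\top = N^2\bar y\bar y^\top - \sum_i y_i y_i^\top$ and subtracting $\EE[\bar Y_1]\EE[\bar Y_2]^\top=\bar y\bar y^\top$ gives, after simplification, $\Cov(\bar Y_1,\bar Y_2) = \frac{1}{N-1}\bigl(\bar y\bar y^\top - \frac1N\sum_i y_i y_i^\top\bigr) = -\frac{1}{N-1}\Sigma$.

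Next I would treat the diagonal blocks: these are the classical finite-population variance of a sample mean, $\Var(\bar Y_k)=\frac{N-n_k}{n_k(N-1)}\Sigma$, obtainable by the same indicator expansion using the within-sample second moments (or quoted as a well-known result). Collecting the four $d\times d$ blocks and factoring out $\Sigma$ gives exactly the claimed $2\times 2$ matrix Kronecker-multiplied by $\Sigma$. The second identity then follows purely algebraically: expanding $\Var(a\bar Y_1 - b\bar Y_2)=a^2\Var(\bar Y_1)+b^2\Var(\bar Y_2)-2ab\,\Cov(\bar Y_1,\bar Y_2)$, pulling out $\frac{\Sigma}{N-1}$ leaves the bracket $a^2\frac{N-n_1}{n_1}+b^2\frac{N-n_2}{n_2}+2ab$, and regrouping $-a^2-b^2+2ab=-(a-b)^2$ yields $-(a-b)^2 + N(a^2/n_1 + b^2/n_2)$, as stated.

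I expect the main obstacle to be the moment bookkeeping in the covariance step: getting the cross-indicator probability $\EE[I_i^{(1)}I_j^{(2)}]=\frac{n_1 n_2}{N(N-1)}$ correct for the disjoint draw (the denominator $N-1$ rather than $N$ is precisely what produces the $-\Sigma/(N-1)$ off-diagonal), and correctly separating the $i=j$ and $i\neq j$ contributions so the combinatorial coefficients collapse to $-\Sigma$ rather than a spurious multiple. Once those moments are pinned down, the remainder is routine linear-algebraic simplification.
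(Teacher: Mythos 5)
Your proposal is correct and follows essentially the same route as the paper's own proof: both introduce sampling-inclusion indicators, compute their first and second moments under disjoint without-replacement sampling (with the key cross-moment $\EE[I_i^{(1)}I_j^{(2)}]=\tfrac{n_1 n_2}{N(N-1)}$ for $i\neq j$ and zero for $i=j$), assemble the block covariance matrix, and obtain the linear-combination identity by bilinear expansion. The only cosmetic difference is that you expand $\EE[\bar Y_1\bar Y_2^\top]$ directly and subtract the product of means, whereas the paper first tabulates indicator covariances and then sums them; the computations are identical in substance.
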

\begin{proof}
Let $\zeta_i$ be an indicator for whether $y_i$ appears in the first sample, and let $\ww_i$ be an indicator for whether $y_i$ appears in the second sample.

Let $\mu = \frac{1}{N} \sum_{i=1}^N y_i$ and let $\Sigma = \frac{1}{N} \sum_{i=1}^N (y_i - \mu)(y_i-\mu)'$

Then for any $i\neq j$:
\*[
	\zeta_i
		&\sim \bernoulli(n_1/N)
			& \ww_i
				&\sim \bernoulli(n_2/N) \\
	\Var(\zeta_i)
		& = \frac{n_1(N-n_1)}{N^2}
			& \Var(\ww_i)
				& = \frac{n_2(N-n_2)}{N^2} \\
\]
\*[
	\Cov(\zeta_i,\zeta_j)
		& = \EE[\zeta_i \zeta_j] - \frac{n_1^2}{N^2}
			& 	\Cov(\ww_i,\ww_j)
				& = \EE[\ww_i \ww_j] - \frac{n_2^2}{N^2} \\
		& = \Pr[\zeta_i= \zeta_j = 1] - \frac{n_1^2}{N^2}
				&& = \PP[\ww_i= \ww_j = 1] - \frac{n_2^2}{N^2}\\
		& = \frac{n_1(n_1-1)}{N(N-1)} - \frac{n_1^2}{N^2}
				&& = \frac{n_2(n_2-1)}{N(N-1)} - \frac{n_2^2}{N^2} \\
		& = - \frac{n_1}{N}\left({1 - \frac{n_1}{N}}\right) \frac{1}{N-1}
				&& = - \frac{n_2}{N}\left({1 - \frac{n_2}{N}}\right) \frac{1}{N-1}
\]
\*[
	\Cov(\zeta_i,\ww_i)
		& = \EE[\zeta_i \ww_i] - \frac{n_1 n_2}{N^2}
			& \Cov(\zeta_i, \ww_j)
				& = \EE[\zeta_i \ww_j] - \frac{n_1 n_2}{N^2}	\\
		& = \Pr[\zeta_i= \ww_j = 1] - \frac{n_1 n_2}{N^2}
				&& = \PP[\zeta_i= \ww_j = 1] - \frac{n_1 n_2}{N^2}\\
		& = 0 - \frac{n_1 n_2}{N^2}
				&& = \frac{n_1 n_2}{N(N-1)} - \frac{n_1 n_2}{N^2}\\							& = - \frac{n_1 n_2}{N^2}
				&& = \frac{n_1 n_2}{N^2(N-1)}.
\]

\*[
	(\bar Y_1, \bar Y_2)
		& = \sum_{i=1}^N y_i (\zeta_i / n_1, \ww_i / n_2)
\]

\*[
	\Var(\bar Y_1)
		& = \Var \left({\sum_{i=1}^N \frac{y_i}{n_1} \zeta_i}\right)\\
		& = \frac{1}{n_1^2} \left({\sum_{i=1}^N y_i y_i' \frac{n_1(N-n_1)}{N^2} -\sum_{i\neq j} y_i y_j' \frac{n_1(N-n_1)}{N^2(N-1)} }\right) \\
		& = \frac{(N-n_1)}{n_1 N^2} \left({\sum_{i=1}^N y_i y_i'-\sum_{i\neq j} y_i y_j' \frac{1}{N-1} }\right)\\
		& = \frac{(N-n_1)}{n_1 (N-1) N} \sum_{i=1}^N (y_i - \mu)(y_i-\mu)'\\
		& = \frac{(N-n_1)}{n_1 (N-1)} \Sigma
\]
Similarly
\*[
	\Var(\bar Y_2)
		& = \frac{(N-n_2)}{n_2 (N-1)} \Sigma
\]
Now, for the less well known part:
\*[
	\Cov(\bar Y_1, \bar Y_2)
		& = \Cov \left({\sum_{i=1}^N \frac{y_i}{n_1} \zeta_i, \sum_{i=1}^N \frac{y_i}{n_2} \ww_i}\right)\\
		& = \sum_{i=1}^N \frac{y_i y_i'}{n_1 n_2} \Cov(\zeta_i,\ww_i) + \sum_{i\neq j} \frac{y_i y_j'}{n_1 n_2} \Cov(\zeta_i,\ww_j) \\
		& = - \sum_{i=1}^N \frac{y_i y_i'}{n_1 n_2} \frac{n_1 n_2}{N^2} + \sum_{i\neq j} \frac{y_i y_j'}{n_1 n_2} \frac{n_1 n_2}{N^2(N-1)} \\
		& = - \frac{1}{N^2} \left({\sum_{i=1}^N y_i y_i' - \sum_{i\neq j} y_i y_j' \frac{1}{N-1}	}\right) \\
		& = - \frac{1}{N-1}\Sigma
\]
Hence
\*[
	\Var\left({\begin{matrix}\bar Y_1 \\  \bar Y_2 \end{matrix}}\right)
		& = \frac{1}{N-1} \left[{\begin{matrix}(N-n_1)/n_1 & -1 \\ -1 & (N-n_2)/n_2\end{matrix} }\right]\otimes \Sigma
\]

For our application we need $\Var(a \bar Y_1 - b\bar Y_2)$:
\*[
	\Var(a \bar Y_1 - b\bar Y_2)
		& = a^2 \frac{(N-n_1)}{n_1 (N-1)} \Sigma + b^2 \frac{(N-n_2)}{n_2 (N-1)}  \Sigma + 2ab \frac{1}{N-1}\Sigma \\
		& = \frac{1}{(N-1)} \left({ a^2 \frac{N-n_1}{n_1} +2ab +b^2 \frac{N-n_2}{n_2}}\right) \Sigma \\
		& = \frac{1}{(N-1)} \left({ -(a-b)^2 + N (a^2/n_1 +b^2/n_2)}\right) \Sigma
\]
\end{proof}
\begin{lemma}[Bounding $\EE \cEE{S_J, J, \RS}  \norm{\xi_t  }_2^2$ for SGLD]\label{lem:sgld-var}
In the setting of \cref{sec:sgld}
\*[
	\EE \cEE{S_J, J, \RS}  \norm{\xi_t  }_2^2 = \frac{n(n-m)}{(n-1)^2b_t} \left({1 + \frac{b_t}{n} \frac{n-m-1}{m}}\right) \EE[\hat{\Sigma}_t(S)]\]
\end{lemma}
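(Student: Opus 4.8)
The plan is to evaluate the unconditional expectation $\EE\norm{\xi_t}_2^2$, which equals the left-hand side $\EE\cEE{S_J,J,\RS}\norm{\xi_t}_2^2$ by the tower property, via a sequence of conditionings. First I would condition on $(\ww_t,S)$, so that the per-example surrogate gradients $g_z=\bgrad{z}{\ww_t}$ become fixed vectors whose finite-population variance matrix is exactly $\hat{\Sigma}_t(S)$ and whose population mean I denote $\bar g$. Because $J$ is drawn independently of the weights and of the minibatch sequence, and the current index set $K_t$ influences only $\ww_{t+1}$ and later iterates (so that $K_t$ is independent of $\ww_t$ given $S$), the only randomness surviving this conditioning is the pair of independent uniform draws $J$ (of size $m$) and $K_t$ (of size $b_t$). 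Writing $\xi_t=\frac{b_t^c}{b_t}\rbra{\bgrad{S^c_t}{\ww_t}-\bgrad{S_J}{\ww_t}}$ exhibits $\xi_t$ as a scaled difference of the sample means of the fixed gradients over the two disjoint index sets $J$ and $K_t\setminus J$.

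The crux is that, conditional on the held-out overlap count $b_t^c=c$ (the number of minibatch indices lying outside $J$), the pair $(J,K_t\setminus J)$ is uniformly distributed over all pairs of disjoint subsets of $\set{1,\dots,n}$ of sizes $m$ and $c$. This follows from exchangeability: the joint law of $(J,K_t)$ is invariant under relabelling the indices by any permutation, and the event $\set{b_t^c=c}$ is permutation-invariant, so the conditional law remains permutation-invariant and hence uniform among disjoint pairs of the prescribed sizes. Each sample mean then has conditional expectation $\bar g$, so $\xi_t$ is conditionally mean-zero given $c$, whence $\EE\brackets{\norm{\xi_t}_2^2\given c}=\trace\Var(\xi_t\given c)$. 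I would then apply \cref{lem:fp-disjoint-var} with $N=n$, $n_1=m$, $n_2=c$, and coefficients read off from $\xi_t=\frac{c}{b_t}(\bar g_{S^c_t}-\bar g_{S_J})$ (here the two coefficients are equal, so the $-(a-b)^2$ term vanishes), yielding $\Var(\xi_t\given c)=\frac{n}{(n-1)b_t^2}\rbra{\frac{c^2}{m}+c}\hat{\Sigma}_t(S)$; the degenerate case $c=0$ is consistent because $\xi_t=0$ there.

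Finally I would average over $c=b_t^c$. As $K_t$ is a uniform size-$b_t$ subset and the complement $J^c$ has size $n-m$, we have $b_t^c\dist\text{HG}(n,n-m,b_t)$, so I can substitute the moments $\EE[b_t^c]$ and $\Var(b_t^c)$ recorded in the preceding subsection to form $\EE[(b_t^c)^2]=\Var(b_t^c)+(\EE[b_t^c])^2$ and hence $\frac1m\EE[(b_t^c)^2]+\EE[b_t^c]$. Inserting this into the variance expression and simplifying produces the coefficient $\frac{n(n-m)}{(n-1)^2 b_t}\rbra{1+\frac{b_t}{n}\frac{n-m-1}{m}}$, and taking the outer expectation over $(\ww_t,S)$ converts $\trace\hat{\Sigma}_t(S)$ into $\trace\EE[\hat{\Sigma}_t(S)]$, as claimed. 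The one genuinely delicate point is this exchangeability argument, which licenses \cref{lem:fp-disjoint-var} even though the overlap size $c$ is itself random; the remaining manipulations of hypergeometric moments and the concluding algebraic identity are routine.
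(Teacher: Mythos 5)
Your proposal is correct and matches the paper's own proof in all essentials: the paper likewise reduces to the conditional variance given $(S,W_t)$ and the overlap count $b_t^c$ (its vanishing first term in the law of total variance is exactly your conditional-mean-zero observation), applies \cref{lem:fp-disjoint-var} with equal coefficients $b_t^c/b_t$ to get the factor $\frac{n}{(n-1)b_t^2}\bigl(b_t^c + (b_t^c)^2/m\bigr)$, and then averages over the hypergeometric law of $b_t^c$ and simplifies. Your explicit exchangeability argument for why $(J, K_t\setminus J)$ is, conditionally on the overlap count, a uniform pair of disjoint subsets is a justification the paper leaves implicit, but it does not constitute a different route.
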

\begin{proof}
Applying the conditional variance formula gives:
\*[
	\EE \cEE{S_J, J, \RS}  \norm{\xi_t}_2^2
		& = \EE \Var^{S, W_t} (\cEE{b_t, W_t, S}[\xi_t]) + \EE \cEE{S, W_t} [ \Var^{b_t,W_t, S}(\xi_t)] \\
		& = 0 + \EE \cEE{S, W_t} \left[{ \Var^{b_t,W_t, S}\left({\frac{b_t^c}{b_t} \bgrad{S^c_t}{\ww_t} - \frac{b_t^c}{b_t} \bgrad{{S_J}}{\ww_t}}\right) }\right] \\
		& = \EE \cEE{S, W_t} \left[{ \frac{(b_t^c)^2}{b_t^2}	 \Var^{b_t, W_t, S}\left({\bgrad{S^c_t}{\ww_t} - \bgrad{{S_J}}{\ww_t}}\right) }\right]\]
Applying \cref{lem:fp-disjoint-var} further yields
\*[
&\hspace{-2em} \cEE{S, W_t}\left[{ \frac{(b_t^c)^2}{b_t^2}	 \Var^{b_t, W_t}\left({\bgrad{S^c_t}{\ww_t} - \bgrad{{S_J}}{\ww_t}}\right) }\right]\\
		& = \cEE{S, W_t}\left[{ \frac{(b_t^c)^2}{b_t^2} \frac{1}{(n-1)} \left({\frac{n}{b_t^c} + \frac{n}{m}}\right) \hat{\Sigma}_t(S)	}\right] \\
		& = \frac{n}{(n-1)b_t^2} \cEE{S, W_t}\left[{ b_t^c + (b_t^c)^2 \frac{1}{m}	}\right] \cEE{S, W_t}[\hat{\Sigma}_t(S)]\\
		& = \frac{n}{(n-1)b_t^2} \left({b_t \frac{n-m}{n} + \left({\frac{(n-m)^2}{n^2} b_t^2 + b_t \frac{m}{n} \frac{n-m}{n} \frac{n-b_t}{n-1}}\right) \frac{1}{m}	}\right) \cEE{S, W_t}[\hat{\Sigma}_t(S)]	\\
		& = \frac{n}{(n-1)b_t^2} \left({b_t \frac{n-m}{n-1} + b_t^2\frac{(n-m)(n-m-1)}{n(n-1)m}}\right) \cEE{S, W_t}[\hat{\Sigma}_t(S)]	\\
		& = \frac{n}{(n-1)b_t^2} \left({b_t \frac{n-m}{n-1} + b_t^2\frac{(n-m)(n-m-1)}{n(n-1)m}}\right) \cEE{S, W_t}[\hat{\Sigma}_t(S)]	\\
		& = \frac{n(n-m)}{(n-1)^2b_t} \left({1 + \frac{b_t}{n} \frac{n-m-1}{m}}\right) \cEE{S, W_t}[\hat{\Sigma}_t(S)]
\]

\end{proof}
\section{Asymptotic Results} \label{apx:asymptotic-results}
\subsection{Langevin Dynamics}
In this section we continue from the end of \cref{sec:ld-bounds}.
Under the assumption that $\sloss$ is $L$-Lispchitz (the same assumption as in \citep{BuZouVeeravalli2019}) we have the following results which portray the asymptotic behavior of the expected generalization error of the Langevin diffusion algorithm for $\loss$ being the $0$-$1$ loss (which is $1/2$-subgaussian):
\*[
\EEE{W_T\sim Q_T}(\Risk{\Dist}{W_T} - \Risk{S}{W_T})
    & \leq \frac{L}{2(n-1)}\sqrt{\sum_{t=1}^T \beta_t \eta_t }
\]
\subsubsection{Geometrically Decaying Learning Rate}
Under an assumption of $L$-Lipschitz loss and geometrically decaying learning rate and a temperature that ramps up to a polynomial in $n$ ($\eta_t = \eta_0\rho^t$ for $0<\rho<1$ and that $\beta_t = \beta_0 (n-1)^\theta (1-\nu^t)$ for some $0 < \theta <1$) then we have the following bound:
\*[
\sup_{T\geq 0} \sbra{\EEE{W_T\sim Q_T}(\Risk{\Dist}{W_T} - \Risk{S}{W_T})}
    & \leq \frac{L}{2(n-1)^{1-\theta}}\sqrt{\beta_0\eta_0 \frac{\rho(1-\nu)}{(1-\rho)(1-\rho\nu)}}
\]
\subsubsection{Polynomial Decaying Learning Rate}
Under an assumption of $L$-Lipschitz loss and polynomial decaying learning rate and temperature that is polynomial in $n$ ($\eta_t = \eta_0 t^{-\alpha}$ for $\alpha >0 $ and that $\beta_t = \beta_0 (n-1)^p$ for some $0 < p<1$) then we have the following bound:
\*[
\sbra{\EEE{W_T\sim Q_T}(\Risk{\Dist}{W_T} - \Risk{S}{W_T})}
    & \leq \begin{cases}
    		\frac{L}{2(n-1)^{1-p}}\sqrt{1 +\frac{1}{\alpha-1} T^{1-\alpha}} & \alpha <1 \\
    		\frac{L}{2(n-1)^{1-p}}\sqrt{1 + \log(T)} & \alpha = 1 \\
    		\frac{L\alpha}{2(n-1)^{1-p} (\alpha-1)} & \alpha >1\\
    	\end{cases}
\]

\section{Comparing \cref{thm:dat-dep-mi,thm:dat-dep-dmi,thm:dat-dep-pacb} when $m=n-1$}\label{apx:bound-compare}
Let
$V\sim P({S_J},\RS)$, $W\sim Q(S,\RS)$, and $\tilde W \sim Q(S,\RS)$ independently of $W$.
In the case of $[a_1,a_2]$-bounded loss, $\rbra{\Risk{\Dist}{{V}} - \EmpRisk{S\setminus S}{{V} }}$ is $(a_2-a_1)/2$-subgaussian, so that \cref{thm:dat-dep-pacb} yields:
\*[
\EE \sbra{ \Risk\Dist {W} - \EmpRisk{S} {\tilde W}}
		& \leq \EE\sqrt{(a_2-a_1)^2 \ \KL{Q(S,\RS)}{P({S_J},\RS)} \ /2 } .
\]
Using KL divergence based upper bounds for mutual information (\cref{prop:variation-mi-kl}), \cref{thm:dat-dep-dmi} gives us
\*[
\EE \sbra{ \Risk\Dist {W} - \EmpRisk{S} {W}}
		& \leq \EE\sqrt{(a_2-a_1)^2 \cEE{S_J,\RS} [\KL{Q(S,\RS)}{P({S_J},\RS)}] \ /2 },
\]
while \cref{thm:dat-dep-mi} yields:
\*[
\EE\sbra{ \Risk{\Dist}{W} - \EmpRisk{S}{W} }
		& \leq \sqrt{(a_2-a_1)^2 \EE\sbra{ \KL{Q(S,\RS)}{P({S_J},\RS)}} / 2}
\]
for $m=n-1$, the bounds are ranked as \ref{thm:dat-dep-mi} $\geq$ \ref{thm:dat-dep-dmi} $\geq$ \ref{thm:dat-dep-pacb} (by Jensen's inequality for each conditional expectation being passed into $\sqrt{\cdot}$).
When $\KL{Q(S)}{P({S_J})}$ has a large variance then the difference can be quite material.

\section{An analytically tractable example}\label{apx:worked-example}
We  present a simple analytic example, where our upper bound is a clear improvement over existing work when similar simplifications are performed.
Let $S = \set{z_1, \ldots, z_n}\sim\Dist^n$ be a sample from the distribution $\Dist$ on $\Reals$.
We wish to estimate the mean of $\Dist$, $\mu$.
We will use the loss function $\loss(z,w) = \sloss(z,w) = (z-w)^2$ where $w \in \parspace = \Reals$.
The distribution $\Dist,$ is assumed to satisfy the sub-Gaussianity assumption in \cref{thm:dat-dep-mi,thm:dat-dep-pacb} for this loss.
Upon specializing the SGLD update rule~\eqref{langevinup} to this setting:
\begin{align}
\ww_{t+1}  &=  \ww_t - \eta_t \frac{d}{d \ww_t} \SurEmpRisk{S}{\ww_t} + \sqrt\frac{{2\eta_t}}{\beta}\epsilon_t \label{eq:sgld-update-ex}
= \left(1- \frac{2 \eta_t}{n}\right) \ww_t + \frac{2\eta_t }{n} \sum_{i=1}^n z_i +  \sqrt\frac{2\eta_t}{\beta}\epsilon_t .
\end{align}
We will apply the data-dependent generalization bound in  \cref{thm:dat-dep-pacb} with $m = \card{S_J} = n-1$ and set $\set{ i^\star} = J $. Since we are working with LD, we set the random variable $\RS$ to a constant (trivial random variable).
It follows that:
\begin{align}
\KL{{Q}_{t+1|}(S)}{{P}_{t+1|}({S_J})} = \frac{(\mu_{t+1} - \mu'_{t+1})^2}{4\eta_t/\beta} = \frac{\beta}{n^2} z^2_{i^\star} \eta_{t}.
\end{align}
Thus the expected generalization error is upper bounded by:
\begin{align}
\EE\sqrt{2\sigma^2 KL(Q_T(S)||P_T({S_J}))} \le   \EE\sqrt{2\sigma^2\frac{\beta}{n^2} z^2_{i^\star} \sum_{t=0}^{T-1} \eta_{t}} = \EE[|z_i|]\left(\sqrt{2\sigma^2\frac{\beta}{n^2}  \sum_{t=0}^{T-1} \eta_{t}} \right).\label{eq:ub-ex}
\end{align}

When one applies the results in~\cite{XuRaginsky2017,PensiaJogLoh2018}, the upper bounded on the generalization error can be shown to be:
\begin{align}
\sqrt{\frac{2\sigma^2}{n} I(W_T; S)} \le \sqrt{\frac{2\sigma^2}{n} \sum_{t=0}^{T-1} I(\bar{W}_{t+1}; S | W_1^t)} \le \sqrt{2\sigma^2\frac{\beta}{n^2} E[z_i^2] \sum_{t=0}^{T-1} \eta_{t}}
\label{eq:t2}
\end{align}
Comparing with~\eqref{eq:ub-ex} we see that this bound can be is larger since $E[|z_i|] \le \sqrt{E[z_i^2]}$ from Jensen's inequality. The discrepancy can be made arbitrarily large based on the choice of $\Dist$.

\section{Experiment Details}\label{apx:experiments}
The first architecture and dataset we consider is a three-layer multilayer perceptron (MLP), with 600 hidden units per hidden layer and rectified linear unit (ReLU) activation functions, trained on MNIST \citep{MNIST}.
In \cref{fig:dif_heldout}, we compare the bound for two amounts of held out data, $n-m = \card{S_J^c}$.
We see that the empirical performance reflects our analytical results that the bound is tighter for large $m$.
As can be inferred from \cref{eq:xi_sgld}, the difference between $\norm{\xi_t}^2$ and $\norm{{\bgrad{}{}}_t}^2$ increases with $m$.

The remainder of our experiments consider convolutional neural networks (CNNs).
For MNIST and Fashion-MNIST,
we use a standard network configuration with two convolutional layers (with 32 and 64 filters of size $5 \times 5$, respectively, followed by $2\times2$ max pooling after each convolutional layer), 
followed by two fully connected layers (1024 nodes each) with ReLU activations.

Our final experiment uses the CIFAR-10 dataset.
The CNN architecture has two convolutional layers and three fully connected layers.
Both convolutional layers use $64$ filters of size $5\times5$.
After each convolutional layer there is a $2\times2$ max pooling layer.
Then, we have three fully connected layers with the number of neurons $384$, $192$, and $10$ respectively.

\subsection{Evaluation of the generalization bound}
We estimate our generalization error bound and that of \citep{pmlr-v75-mou18a} using nested Monte Carlo simulations. 
We use the results of \cref{thm:sgld-bounds}, specifically \cref{eq:gen_sgld_double_exp}. 
In order to evaluate this bound we perform two Monte Carlo estimate: one for $\cEE{S, J,\RS}  \norm{\xi_t }_2^2$, and then for the full expectation (outside of the $\sqrt{\cdot}$). 
For our bound, for each hyperparameter combination, we have used $10$ simulations for the outer expectation, each using 10 simulations to estimate the inner expectation. For the generalization bound in \citet{pmlr-v75-mou18a} we have used their 100 simulations to evaluate the bound given by their Theorem~$10$.

\subsection{Learning Rate and Inverse Temperatures for \cref{fig:gen_bound_temp,fig:gen_bound_lr}}
In \cref{fig:gen_bound_temp}, we use 
\[
  \beta_{t}^{(\text{high})}
    & =100 \times \max\{\exp\left(\frac{\textsf{t}}{100}\right), 55000\} \\
  \beta_{t}^{(\text{low})}
    & =100\times \max\{\exp\left(\frac{\textsf{t}}{100}\right), 5000\}
\]
where $t$ denotes the iteration number. 

All other parameters are the same and are outlined in \cref{table:cnn_mnist}. 

For the ``high' inverse temperature schedule, at Iteration $5$ the training error  is $3.21 \%$ and the generalization error is $0.9 \%$, while for the ``low' inverse temperature schedule, at epoch $5$ the training error is $5.18 \%$ and the generalization error is $0.16 \%$.

In \cref{fig:gen_bound_lr} we consider $\eta_t^{(\text{small})}=8\times 10^{-4}\times  0.96^{\left(\frac{\textsf{t}}{2000}\right)}$ and $\eta_t^{(\text{large})}=2\times 10^{-3}\times  0.96^{\left(\frac{\textsf{t}}{2000}\right)}$, and the rest of the parameters are the same and are outlined in \cref{table:cnn_mnist}. 
For the ``small'' learning rate, the training error and the test-set generalization error at Epoch $6$ for the small learning rate scenario are $7.62 \%$ and $1.1 \%$ , respectively,; while for the ``large'' learning rate the training error and the test-set generalization error at Epoch $6$ are $6.3 \%$ and $1.0 \%$, respectively. 

\subsection{Hyperparameters of our experiments}
In \cref{table:dif_heldout,table:cnn_mnist,table:cnn_fashion,table:cnn_cifar}, we provide the hyperparameter and training details of the experiments that were presented in \cref{sec:experiments}.
\begin{table}[h]
\centering
\begin{tabular}{|r|l|}
\hline
\textbf{Parameter}            & \textbf{Values} \\ \hline
Dataset & MNIST \\ \hline
Architecture &  MLP with 3 hidden layers \\ \hline
Batch size          & $100$       \\ \hline
Learning rate        &  \texttt{learning rate=$8\times10^{-3}$,decay steps=$600$, decay rate=$0.95$}      \\ \hline
Beta schedule        & $\min\{10\times \exp\left(\text{iter}/400\right),2000\}$       \\ \hline
Number of epochs    & $15$       \\ \hline
Average Final training error & $1.40\%$       \\ \hline
Average Final test error     & $4.12\%$       \\ \hline
\# training examples   & $55000$      \\ \hline
Number of runs     & $50$       \\ \hline
\end{tabular}
\caption{Details of Experiments reported in \cref{fig:dif_heldout} for MNIST with MLP}
\label{table:dif_heldout}
\end{table}

\begin{table}[h]
\centering
\begin{tabular}{|r|l|}
\hline
\textbf{Parameter}            & \textbf{Values} \\ \hline
Dataset & MNIST \\ \hline
Architecture &  CNN with 2 conv. layers \\ \hline
Batch size           & $100$       \\ \hline
Learning rate        & \texttt{learning rate=$4\times10^{-3}$,decay steps=$2000$, decay rate=$0.96$}       \\ \hline
Beta schedule        & $\min\{10\times \exp\left(\text{iter}/100\right),55000\}$       \\ \hline
Number of epochs     & $15$      \\ \hline
Average Final training error & $1.81 \%$       \\ \hline
Average Final test error    & $2.03 \%$      \\ \hline
\# training examples   & $55000$   \\ \hline
Number of runs     & $50$       \\ \hline
\end{tabular}
\caption{Details of Experiments reported in \cref{fig:cnn_mnist,fig:gen_bound_temp,fig:gen_bound_lr} for MNIST with CNN}
\label{table:cnn_mnist}
\end{table}

\begin{table}[h]
\centering
\begin{tabular}{|r|l|}
\hline
\textbf{Parameter}            & \textbf{Values} \\ \hline
Dataset & Fashion-MNIST \\ \hline
Architecture & CNN with 2 conv. layers \\ \hline
Batch size           & $100$       \\ \hline
Learning rate        & \texttt{learning rate=$4\times10^{-3}$,decay steps=$3500$, decay rate=$0.93$}      \\ \hline
Beta schedule        & $\min\{10\times \exp\left(\text{iter}/100\right),55000\}$        \\ \hline
Number of epochs     & $25$      \\ \hline
Average Final training error & $8.3 \%$       \\ \hline
Average Final test error     & $10.83 \%$       \\ \hline
\# training examples   &  $60000$   \\ \hline
Number of runs     & $20$       \\ \hline
\end{tabular}
\caption{Details of Experiments reported in \cref{fig:cnn_fashion} for Fashion-MNIST}
\label{table:cnn_fashion}
\end{table}

\begin{table}[h]
\centering
\begin{tabular}{|r|l|}
\hline
\textbf{Parameter}            & \textbf{Values} \\ \hline
Dataset & CIFAR-10 \\ \hline
Architecture & CNN with 2 conv. layers \\ \hline
Batch size           & $200$       \\ \hline
Learning rate        & \texttt{learning rate=$5\times10^{-3}$,decay steps=$2000$, decay rate=$0.95$}      \\ \hline
Beta schedule       & $\min\{10\times \exp\left(\text{iter}/100\right),55000\}$        \\ \hline
Number of epochs     & $50$       \\ \hline
Average Final training error & $6.9\%$     \\ \hline
Average Final test error     & $29.9\%$       \\ \hline
$|{S_J}|$   & \texttt{len(training\_set)-1}      \\ \hline
\# training examples   &  $50000$   \\ \hline
Number of runs     & $30$      \\ \hline
\end{tabular}
\caption{Details of Experiments reported in \cref{fig:cnn_cifar} for CIFAR-10}
\label{table:cnn_cifar}
\end{table}

\section{High Probability PAC-Bayes Bounds}
We can leverage the methods used to provide bounds on the expected generalization error above to also derive high probability bounds for the generalization error.
We will give an example of this here for completeness, though more work can be done  to select a tighter bound from more recent literature and to tune the parameters available to optimize the bound further.
For example, in our setting we could optimally tune the level of data dependence for the bound to be tightened.
We will make use of \citet{shalev2014understanding} (theorem 31.1 therein), which we state here under the notation and definitions of our work, and in the context of \cref{sec:sgld}.

\begin{proposition}[\citep{shalev2014understanding} Theorem 31.1] Suppose that the loss function is $[0,1]$-bounded.
Let $P$ be any prior distribution.
With probability at least $(1-\delta)$ (over the choice of $S\sim\Dist^n$) for any posterior distribution $Q$ (even those depending on $S$) with $\ww \sim Q$,
\*[
	\cEE{S} \sbra{\Risk{\Dist}{\ww_T} -\EmpRisk{S}{\ww_T}}
		& \leq \sqrt{\frac{\KL{Q}{P} +\log(n/\delta) }{2(n-1)}}
\]
\end{proposition}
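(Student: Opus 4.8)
The plan is to establish this via the classical change-of-measure argument that underlies McAllester-style \PACBayes{} bounds, specialized to the notation used here. Throughout, fix the data-independent prior $P$ and, for each parameter $w$, write the pointwise generalization gap $\Delta(w) = \Risk{\Dist}{w} - \EmpRisk{S}{w}$. The entire argument is organized around the scalar functional $f(w) = 2(n-1)\Delta(w)^2$, whose $Q$-average we relate to $\KL{Q}{P}$ on one side, and whose $P$-average we control in high probability on the other.

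First I would invoke the Donsker--Varadhan change-of-measure inequality (cf.\ \cref{prop:variation-mi-kl}): for every posterior $Q \ll P$,
\[
\EEE{w\sim Q}\sbra{f(w)} \le \KL{Q}{P} + \log \EEE{w\sim P}\sbra{e^{f(w)}}.
\]
The decisive feature is that the last term does not depend on $Q$; hence, once it is controlled with high probability over $S$, the inequality holds \emph{simultaneously} for all posteriors $Q$, including data-dependent ones such as $Q = \cPr{S}{\ww_T}$, for which $\EEE{w\sim Q}\sbra{\Delta(w)} = \cEE{S}\sbra{\Risk{\Dist}{\ww_T} - \EmpRisk{S}{\ww_T}}$.

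Second, I would bound $\EEE{w\sim P}\sbra{e^{f(w)}}$ in high probability. Since $P$ is independent of $S$, Tonelli's theorem lets me exchange the $P$- and $S$-expectations, reducing the task to a per-parameter exponential-moment bound. For each fixed $w$, $\EmpRisk{S}{w}$ is an average of $n$ i.i.d.\ $[0,1]$-valued summands with mean $\Risk{\Dist}{w}$, and a Hoeffding-type lemma yields $\EE_S\sbra{e^{2(n-1)\Delta(w)^2}} \le n$. Therefore $\EE_S\,\EEE{w\sim P}\sbra{e^{f(w)}} \le n$, and Markov's inequality applied to the nonnegative random variable $\EEE{w\sim P}\sbra{e^{f(w)}}$ gives, with probability at least $1-\delta$ over $S$, that $\EEE{w\sim P}\sbra{e^{f(w)}} \le n/\delta$, hence $\log \EEE{w\sim P}\sbra{e^{f(w)}} \le \log(n/\delta)$.

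Finally, I would splice the two steps together. On this high-probability event the change-of-measure bound reads $2(n-1)\,\EEE{w\sim Q}\sbra{\Delta(w)^2} \le \KL{Q}{P} + \log(n/\delta)$ for every $Q$; applying Jensen's inequality in the form $\rbra{\EEE{w\sim Q}\sbra{\Delta(w)}}^2 \le \EEE{w\sim Q}\sbra{\Delta(w)^2}$ and rearranging yields the claimed square-root bound. I expect the main obstacle to be the exponential-moment lemma $\EE_S\sbra{e^{2(n-1)\Delta(w)^2}} \le n$: controlling the moment generating function of the \emph{squared} deviation (rather than of the deviation itself) is what forces the $[0,1]$-boundedness hypothesis and simultaneously produces the $2(n-1)$ denominator and the $\log(n/\delta)$ numerator, whereas the remaining steps are routine applications of Fubini, Markov, and Jensen.
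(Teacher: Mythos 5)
The paper never proves this proposition itself---it is imported verbatim from \citet{shalev2014understanding} (Theorem~31.1)---and your argument is a faithful reconstruction of the standard proof of that theorem: Donsker--Varadhan change of measure applied to $f(w)=2(n-1)\Delta(w)^2$, the data-only exponential-moment bound $\EE_S\,\EEE{w\sim P}\sbra{e^{f(w)}}\le n$, Markov's inequality (whose $Q$-free right-hand side is exactly what makes the bound hold uniformly over all posteriors, including data-dependent ones), and finally Jensen's inequality to pass from $\EEE{w\sim Q}\sbra{\Delta(w)^2}$ to $\rbra{\EEE{w\sim Q}\sbra{\Delta(w)}}^2$. The only place deserving more care is the moment lemma you flag as the main obstacle: naive two-sided Hoeffding plus tail integration gives only $\EE_S\sbra{e^{2(n-1)\Delta(w)^2}}\le 2n-1$ (which would merely change the numerator to $\log(2n/\delta)$), and obtaining the constant $n$ requires a sharper argument, e.g.\ Pinsker's inequality combined with Maurer's bound $\EE_S\sbra[1]{e^{n\,\klbin{\EmpRisk{S}{w}}{\Risk{\Dist}{w}}}}\le 2\sqrt{n}$, which is precisely the lemma (Exercise~31.1) invoked in the cited textbook's proof.
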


In our setting $P$ will be allowed to depend on $m$ data points chosen uniformly at random, while $Q$ will depend on the full dataset, so we can apply this result conditional on the subset upon which $P$ depends.
Therefore, for any $S_J \in \dataspace^m$ and any $\RS\in\mathcal{\RS}$ and for any kernel $P:\dataspace^n\times \mathcal{\RS} \to \mathcal{M}_1(\parspace^T)$ be any prior distribution which depends on $S_J$, with probability at least $(1-\delta)$ (over the choice of $S_J^c\sim\Dist^{n-m}$) for any posterior distribution $Q$ (even those depending on $S_J^c$) with $\ww \sim Q$,
\*[
	\cEE{S} \sbra{\Risk{\Dist}{\ww_T} -\EmpRisk{S_J^c}{\ww_T}}
		& \leq \sqrt{\frac{\KL{Q(S,\RS)}{P(S_J,\RS)} +\log((n-m)/\delta) }{2(n-m-1)}}\\
		& \leq \sqrt{\frac{\sum_{t=1}^T     \frac{\beta_t \eta_t}{8} \cEE{S, J, \RS}  \norm{\xi_t  }_2^2 +\log((n-m)/\delta) }{2(n-m-1)}}
\]
In the case of Langevin dynamics when using worst case, Lipschitz constant base upper bounds, this gives
\*[
	\cEE{S}\sbra{\Risk{\Dist}{\ww_T} -\EmpRisk{S_J^c}{\ww_T}}
		& \leq \sqrt{\frac{\frac{L^2}{(n-1)(m-1)}\sum_{t=1}^T     \frac{\beta_t \eta_t}{8}+\log((n-m)/\delta) }{2(n-m-1)}}
\]
which provides a less trivial tradeoff between $m$ and $n-m$ compared to the expected generalization error bound.
One could further take expectations over $\RS$ and/or $J$ to get high probability bounds for the generalization error based on the full empirical loss.

We intend to investigate such bounds further in future work, and this section serves merely to illustrate the possibility and nature of such high-probability bounds based on data-dependent estimates of mutual information and data-dependent PAC-Bayes priors.
We acknowledge that these are not the tightest such bounds possible.

\end{document}